\newcommand\veps{\varepsilon}
\newcommand{\KL}{\operatorname{KL}}
\newcommand{\landmark}{\operatorname{lm}}
\newcommand{\direct}{\operatorname{direct}}
\newcommand{\opt}{\operatorname{opt}}
\newcommand{\Dcontrast}{\Dcal_{\operatorname{contrast}}}
\newcommand{\supp}{\operatorname{supp}}
\newcommand{\var}{\operatorname{var}}
\newtheorem*{rep@theorem}{\rep@title}
\newcommand{\newreptheorem}[2]{%
\newenvironment{rep#1}[1]{%
 \def\rep@title{#2 \ref{##1}}%
 \begin{rep@theorem}}%
 {\end{rep@theorem}}}
\title{Contrastive learning, multi-view redundancy, and linear models}
\author[1]{Christopher Tosh}
\author[2]{Akshay Krishnamurthy}
\author[1,3]{Daniel Hsu}
\affil[1]{Data Science Institute, Columbia University, New York, NY 10027}
\affil[2]{Microsoft Research, New York, NY 10011}
\affil[3]{Department of Computer Science, Columbia University, New York, NY 10027}
\begin{document}

\maketitle
{\def\thefootnote{}
\footnotetext{E-mail:
  \texttt{c.tosh@columbia.edu},
  \texttt{akshaykr@microsoft.com},
  \texttt{djhsu@cs.columbia.edu}}

\begin{abstract}%
  Self-supervised learning is an empirically successful approach to unsupervised
learning based on creating artificial supervised learning problems. A popular
self-supervised approach to representation learning is contrastive learning,
which leverages naturally occurring pairs of similar and dissimilar data
points, or multiple views of the same data. This work provides a theoretical
analysis of contrastive learning in the multi-view setting, where two views of
each datum are available. The main result is that linear functions of the
learned representations are nearly optimal on downstream prediction tasks
whenever the two views provide redundant information about the label.
\end{abstract}

\section{Introduction}
\label{section:intro}

Self-supervised learning has emerged as a popular and empirically successful class of methods for learning useful representations from unlabeled data.
Broadly speaking, self-supervised learning refers to techniques that take advantage of naturally occurring structure in unlabeled data to create artificial supervised learning problems and then solves these problems using machine learning methods such as deep learning. The hope is that in solving these problems, a learning algorithm will also create internal representations for data that are useful for other downstream learning tasks.  Self-supervised techniques include de-noising autoencoders~\citep{VLBM08}, image inpainting~\citep{PKDDE16}, and the focus of this work, contrastive learning~\citep{HCL06,oord2018representation,LL18,hjelm2018learning,AKKPS19,bachman2019learning,tian2019contrastive,TKH20,CKNH20}.

A common theme among many of these self-supervised representation learning works is the exploitation of naturally occurring similar points, or multiple views of the same data points. To train a de-noising autoencoder, one first creates an alternate ``view'' of data points by corrupting them with added noise and then trains the autoencoder to reconstruct the original. Image inpainting removes patches of images and trains models to reconstruct the original image. Contrastive learning trains models to distinguish naturally occurring similar pairs of points, such as neighboring sentences~\citep{LL18} or randomly cropped and blurred versions of the same image~\citep{CKNH20}, from random pairs of points.

However, exploiting multiple views of data for representation learning is not a new technique. Canonical correlation analysis (CCA)~\citep{H36} is a classical (unsupervised) technique that finds the linear transformation that aligns two views of data so that the resulting coordinates are uncorrelated. A fascinating line of work \citep{ando2005framework, KF07, AZ07, FJKZ09} investigated the quality of representations produced by CCA (and related linear methods) for downstream regression problems. Most relevant to the current work, \citet{KF07} and \citet{FJKZ09} demonstrated that linear regression with the CCA representation in a low-dimensional space will have near-optimal performance relative to the best linear function of the original representation when there is some redundancy among the two views, i.e., whenever the best linear prediction of the label on each individual view is nearly as good as the best linear prediction of the label when both views are used together.

In this work, we examine contrastive learning from the perspective of multi-view redundancy, analogously to the CCA analysis of \citet{KF07} and \citet{FJKZ09}. We show that when there is some redundancy between the two views on the label, contrastive learning leads to representations such that \emph{linear} functions of these representations are competitive with the (possibly \emph{non-linear}) Bayes optimal predictor of the label.
Our analysis is rather general, and we give bounds on the dimensionality of the representations that is sufficient to lead to good performance in the downstream prediction task.
We consider two specific representations based on contrastive learning.
The first of these is a general-purpose construction that uses the ``landmark embedding'' technique of~\citet{TKH20}.
The second representation is formed by solving a particular bivariate optimization problem.
In both cases, we show that we can use low-dimensional representations and still achieve near-optimal downstream performance with linear methods.
We instantiate our results in some simple latent variable models for illustration.

\subsection{Overview of results}

In the multi-view setting, data points are represented as triples of random variables $(X,Z, Y)$ where $X$ and $Z$ represent two views of the data and $Y$ is some label or regression value to be predicted. 
``Views'' should be interpreted liberally here. For example, they could correspond to the first and second halves of a document or to two different distortions of the same image. However, the main property that we will require of our views is that they share redundant information with respect to predicting $Y$. That is, predicting $Y$ from $X$ or $Z$ individually should be nearly as accurate as predicting $Y$ from $X$ and $Z$ together.

When $X$ and $Z$ do satisfy this redundancy property, we show that there is a surprisingly effective prediction strategy: given $X$, first try to infer $Z$ and then predict $Y$ based only on the inferred $Z$. Specifically, we prove the following lemma.

\begin{figure}
\begin{minipage}{0.36\textwidth}
\centering
\begin{tikzpicture}
\draw (-0.7, 1.0) rectangle (4.7, -0.8);
\draw(0,0) circle [radius=0.4] node {$X$};
\draw(2,0) circle [radius=0.4] node {$\hat{Z}$};
\draw(4,0) circle [radius=0.4] node {$\hat{Y}$};
\draw[-{Latex[length=1mm,width=2mm]}] (0.4,0) -- (1.6,0);
\draw[-{Latex[length=1mm,width=2mm]}] (2.4,0) -- (3.6,0);
\draw (1.0,-0.35) node {infer};
\draw (3.0,-0.35) node {predict};
\draw (4.3, 0.75) node {$\mu(x)$};
\end{tikzpicture}\\
\begin{tikzpicture}
\draw (-0.7, 1.05) rectangle (3.7, -1.05);
\draw(0,0.5) circle [radius=0.4] node {$X$};
\draw(0,-0.5) circle [radius=0.4] node {$Z$};
\draw(3,0) circle [radius=0.4] node {$\hat{Y}$};
\draw[-{Latex[length=2mm,width=2mm]}] (0.4,0.5) -- (2.6,0);
\draw[-{Latex[length=2mm,width=2mm]}] (0.4,-0.5) -- (2.6,0);
\draw (1.2,0.0) node {predict};
\draw (3.0, 0.8) node {Bayes};
\end{tikzpicture} \\
Multi-view prediction strategies
\end{minipage}
\begin{minipage}{0.63\textwidth}
  \centering
\begin{tikzpicture}
\draw (-0.5,1.5) ellipse (1.25cm and 0.75cm);
\draw[text width=2cm,align=center] (-0.5,1.5) node {unlabeled data $(x,z)$};
\draw (1.5,2) rectangle (3.5,1);
\draw[text width=2cm,align=center] (2.5,1.5) node {contrastive learning};
\draw[-{Latex[length=1mm,width=2mm]}] (0.75,1.5) -- (1.5,1.5);
\draw[-{Latex[length=1mm,width=2mm]}] (3.5,1.5) -- (4.75,1);
\draw[text width=2cm,align=center] (4.1,1.5) node {$\varphi$};
\draw (2.5,0) ellipse (1.25cm and 0.75cm);
\draw[text width=2cm,align=center] (2.5,0) node {labeled data $(x,z,y)$};
\draw[-{Latex[length=1mm,width=2mm]}] (3.75,0) -- (4.75,0.75);
\draw (4.75,1.35) rectangle (7.75,0.4);
\draw[text width=3cm,align=center] (6.25,0.85) node {linear prediction $x \mapsto w^\top \varphi(x)$};
\end{tikzpicture} \\
Semi-supervised learning setting
\end{minipage}
\caption{Left: Two prediction strategies in the multi-view setup. When
  the views are redundant, $\mu(x)$ (top) is competitive with the Bayes
  optimal predictor (bottom). Right: our semi-supervised setting. The
  representation $\varphi$ is trained in an unsupervised fashion using
  the two views $(x,z)$. Then a linear predictor of features $\varphi$
  is trained using labeled data. }
  \label{fig:main}
\end{figure}
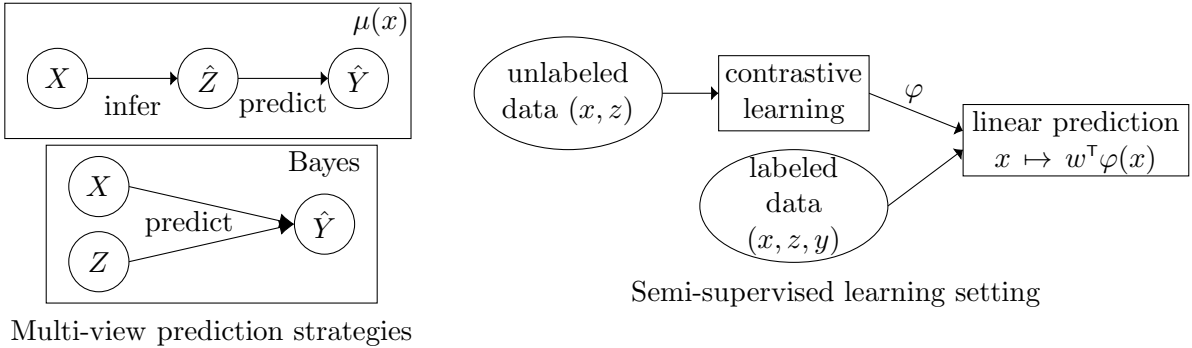

\begin{replemma}{lemma:mu_X-approximation}[Restated]
If $X,Z, Y$ are random variables, then 
\[\EE \left[( \EE[\EE[Y \mid Z] \mid X]  - \EE[Y \mid X, Z])^2 \right] \ \leq \  \veps_X + 2\sqrt{\veps_X\veps_Z} + \veps_Z \]
where $\veps_W = \EE \left[ \left( \EE[Y \mid W] - \EE[Y \mid X, Z] \right)^2 \right]$ for each $W \in \{ X,Z \}$.
\end{replemma}

The strategy of Lemma~\ref{lemma:mu_X-approximation}, illustrated in Figure~\ref{fig:main} (left), is reminiscent of the information bottleneck method~\citep{TPB99}, in which predicting $Y$ from $X$ is done by first compressing $X$ to a ``smaller'' representation $\hat{X}$ and then predicting $Y$ using $\hat{X}$. In our case, the separate view $Z$ acts as a natural intermediate target instead of $\hat{X}$. As it will turn out, Lemma~\ref{lemma:mu_X-approximation} is the basis of all of the following results.

It is also worth mentioning that the quantities $\veps_X$ and $\veps_Z$ in Lemma~\ref{lemma:mu_X-approximation} can be bounded as functions of the conditional mutual information of $X$, $Y$, and $Z$ \citep{tao2006szemer, ahlswede2007final, wu2011functional}. Specifically, \citet{wu2011functional} demonstrated that when $Y \in [-1,1]$, we have
\begin{align*}
\EE \left[ \left( \EE[Y \mid X] - \EE[Y \mid X, Z] \right)^2 \right] \ &\leq \ \frac{1}{2} I(Y;Z \mid X) 
\end{align*}
where $I(Y; Z \mid X)$ is the mutual information of $Y$ and $Z$ conditioned on $X$. A symmetric inequality also holds when $X$ and $Z$ are swapped. Combining this result with Lemma~\ref{lemma:mu_X-approximation}, the performance of the strategy that infers $Z$ from $X$ and then predicts $Y$ based only on this inferred $Z$ can also be bounded as a function of $I(Y; Z \mid X)$ and $I(Y; X \mid Z)$.

We are primarily concerned with the setting where we have lots of unlabeled data, i.e., $(X, Z)$ pairs, and rather less labeled data $(X,Z,Y)$. In such situations, one natural strategy is to use the unlabeled data to learn a representation of $X$ (or $Z$ or $(X,Z)$), and then use the small collection of labeled data to learn a simple function, like a linear predictor, on top of this new representation.
The setting is displayed in Figure~\ref{fig:main} (right).
In this work, we will look at the specific representation learning algorithm posed by~\citet{TKH20}, which is a type of contrastive learning algorithm.

The approach of \citet{TKH20} is to learn a function $f$ that distinguishes between true data points $(X,Z)$, and fabricated data points $(X, \tilde{Z})$, where $X$ and $\tilde{Z}$ come from independently sampled data points $(X,Z)$ and $(\tilde X, \tilde Z)$. The idea is that such a function $f$ will learn enough about the relationship between $X$ and $Z$ to allow us to predict the label $Y$ from $X$ through $Z$ as in Lemma~\ref{lemma:mu_X-approximation}.

In Section~\ref{section:landmark-representations}, we show that one can extract an embedding of a view $X$ from the learned function $f$ such that linear functions on top of this embedding are competitive with the best predictor of $Y$ from $X$. Moreover, this embedding will be the same one proposed by \citet{TKH20} which uses landmark views $Z_1, \ldots, Z_m$ that are i.i.d.\ copies of $Z$ and embeds a point $x$ with the prediction values $f(x,Z_1), \ldots, f(x,Z_m)$.

\begin{reptheorem}{theorem:landmark-embedding-error}[Restated]
  Given a solution $f^\star \colon \Xcal \times \Zcal \to \RR$ to the contrastive learning problem and embedding points $Z_1, \dotsc, Z_m$ sampled i.i.d.\ from the marginal distribution of $Z$, the landmark embedding $\varphi^\star \colon \Xcal \to \RR$ based on $f^\star$ and $Z_1,\dotsc,Z_m$ defined in Section~\ref{section:landmark-representations}
  satisfies, with high probability,
  \begin{align*}
  \min_{w \in \RR^m} \EE \left[ \left( w^\top \varphi^\star(X) - \EE[Y \mid X, Z] \right)^2 \right] 
  \ & \leq \ \veps_X + 2\sqrt{\veps_X\veps_Z} + \veps_Z  + O_m( 1/m ) .
  \end{align*}
\end{reptheorem}
To avoid clutter, we have used big-$O_m$ in the above statement to suppress all factors that do not depend on $m$, including logarithmic factors in the failure probability and other quantities that depend on the distribution of $X$ and $Z$. The full statement is provided in Section~\ref{section:landmark-representations}.

While the results of Section~\ref{section:landmark-representations} demonstrate that contrastive learning can lead to useful representations in the presence of redundancy, the landmark embedding technique is not reflective of what is done in practice. In practice, self-supervised representation learning algorithms typically optimize embedding functions directly~\citep{HCL06, CKNH20}. To address this, in Section~\ref{section:direct-embeddings}, we investigate the strategy of trying to learn the embedding functions directly. That is, we look at the \emph{bivariate architecture} setting where we learn $\RR^m$-valued functions $\eta, \psi$ such that $\eta(x)^\top \psi(z)$ distinguishes between the real and fake data points. As we will see, the benefit of this approach is that when there exist pairs of accurate functions, $\eta$ (and also $\psi$) allow us to do useful linear predictions. However, it is unclear \emph{a priori} how large the output dimension of $\eta$ and $\psi$ needs to be to achieve this.

As a first step towards understanding this dimensionality question, we consider the setting where there is some hidden variable $H$ that renders the two views $X$ and $Z$ conditionally independent. 
Note that there is always a trivial random variable to achieve this, namely $H=(X,Z)$.
However, we show that when the hidden variable obeys a nicer structure, the dimensionality of the embedding can be drastically improved.
Specifically, we show the following.
\begin{itemize}
	\item[(a)] When the hidden variable $H$ takes values in a finite set, the cardinality of this set is an upper bound on the dimensionality needed for an \emph{exact} embedding.
	\item[(b)] In the general setting, there exist approximate embeddings where the approximation factor decreases at a rate of $O_m(1/m)$, where the big-$O_m$ notation suppresses dependence on a particular variance quantity of the hidden variable structure.
\end{itemize}
Importantly, there is no assumption that the hidden variable structure is known. Rather, our results imply that solving the bivariate contrastive learning problem automatically recovers embeddings whose performance can be bounded by factors that depend on the underlying hidden variables. 

Finally, in Section~\ref{section:error-analysis}, we analyze how errors in optimizing the contrastive objectives propagate to the performance of these representations on downstream linear prediction tasks. We investigate this error propagation for both the landmark and direct embeddings, and we show that the downstream prediction risk has a smooth relationship with the excess contrastive loss.

Along the way, we illustrate these results with simplified running examples of a topic model~\citep{blei2003latent} and a Gaussian latent variable model.
However, our results are applicable to other multi-view settings, including co-training, certain mixture models, hidden Markov models, and phylogenetic tree models~\citep[e.g.,][]{blum1998combining,dasgupta2002pac,mossel2005learning,chaudhuri2009multi,allman2009identifiability,anandkumar2012method}.

Additionally, in the appendix we consider the transfer learning scenario, where the unlabeled distribution used for representation learning is not the same as the test distribution used for downstream prediction. We show that in certain settings, the landmark embeddings can be ``fine-tuned" to work under the test distribution.

\subsection{Related work}

A number of recent works have sought to theoretically explain the success of contrastive learning specifically, and self-supervised learning more generally. 
\citet{AKKPS19} presented a theoretical treatment of contrastive learning that considered the specific setting of trying to minimize the loss
$L(\phi) \ = \ \EE [\ell ( \phi(X)^\top (\phi(X_+) - \phi(X_-) ) )]$,
where $(X, X_+)$ is a random ``positive'' pair, $(X,X_-)$ is a random ``negative'' pair, and $\ell$ is a binary classification loss such hinge or logistic loss.
They showed that if there is an underlying collection of latent classes and positive examples are generated by draws from the same class, then minimizing the contrastive loss over embedding functions $\phi$ yields good representations for distinguishing latent classes with linear models.

In work concurrent with the present paper, \citet{LLSZ20} considered a self-supervised scheme in which two views $(X, Z)$ are available for each data point, and the representation learning objective is a reconstruction error of $Z$ based on a function of $X$:
$L(\phi) \ = \ \EE\| Z - \phi(X) \|^2$.
(They assume that $Z$ takes values in a suitable normed space.)
They showed that if the two views are approximately independent conditioned on the label, then linear functions of the learned representation are capable of predicting the label.
This approach resembles the representation learning methods of \citet{ando2005framework,AZ07}, as well as methods for learning predictive state representations dynamical systems~\citep{littman2002predictive,hsu2009spectral,langford2009learning,song2010hilbert}.
The self-supervised problem we study is instead a classification problem rather than a (possibly multidimensional output) regression problem.

Most relevant to the current work, \citet{TKH20} also considered the problem of contrastive learning under certain generative assumptions.
Specifically, they showed that when the two views of the data point correspond to random partitions of a document, contrastive learning recovers information related to the underlying topics that generated the document.
The contrastive learning problem they study is also a classification problem rather than a regression problem.

Also related is the use of self-supervised learning for exploration in a model for reinforcement learning called Block MDPs~\citep{du2019provably,misra2020kinematic}.
In these settings, self-supervised learning is used to derive decoders of unobserved latent state from observations.
The analyses in these works apply to cases where exact decoding of the state is possible.
In particular, the method studied by \citet{misra2020kinematic} uses a contrastive learning objective similar to the one we analyze.
In this paper, we study an example that resembles the Block MDP, but our analysis applies more broadly to scenarios where latent variables cannot be perfectly decoded from the observations.

The contrastive estimation technique we study, now known as ``Noise Contrastive Estimation'' \citep[NCE;][]{GH10}, was also theoretically analyzed in other contexts, including density level set estimation~\citep{SHS05,AZL06}, parametric estimation~\citep{GH10,ma2018noise}, and nonlinear ICA~\citep{hyvarinen2016unsupervised,hyvarinen2017nonlinear,hyvarinen2019nonlinear}.
Although the setups in these works do not consider the use of a learned representation in a downstream task, NCE has inspired many empirical works that use the technique in this way.
The primary motivation for NCE given in these works is the relationship between NCE and maximizing mutual information~\citep[e.g.,][]{oord2018representation,hjelm2018learning,bachman2019learning,tian2019contrastive}, and the usefulness of the learned representation is attributed to this connection.
Although this connection also makes an appearance in our work, it is subordinate to multi-view redundancy in our analysis.
\citet{mcallester2020formal} highlight some limitations on measuring mutual information in these contexts, which raises some doubt that this mutual information perspective can solely explain the success of NCE.
Other doubts about the mutual information perspective are raised by~\citet{tschannen2019mutual}.

\section{Contrastive learning}

In this section, we formalize contrastive learning in the multi-view setting and introduce the redundancy assumption that is key to our analysis.

\subsection{Multi-view data distribution and notation}
\label{section:data}

We consider the multi-view setting, in which data points take the form $(x, z, y) \in \Xcal \times \Zcal \times \RR$, for some pair of data spaces $(\Xcal, \Zcal)$. Here $x$ and $z$ refer to the separate views of the data point, and $y$ refers to its label or regression value.
We assume that there is some distribution over $(x,z,y)$ triples, and we denote the corresponding random variables with capital letters $(X,Z,Y)$. For simplicity, we assume these random variables have either (joint) probability mass functions or probability density functions, and denote them by their corresponding letters, e.g. $p_{X,Z,Y}$, $p_X$, etc.
At various points, we will introduce a hidden variable $H$, and there we shall use $p_{H \mid X}$ to denote the conditional distribution of $H$ given $X$.
For instance, $p_{H \mid X}(h \mid x) = \Pr(H=h \mid X=x)$ when $H$ is discrete.
Finally, we use $p \otimes q$ to denote to the product distribution with marginals $p$ and $q$.

Our main interest is in the semi-supervised learning setting, in which we have both \emph{unlabeled data} from $\Xcal \times \Zcal$, typically modeled as i.i.d.\ copies of $(X,Z)$, as well as \emph{labeled data} from $\Xcal \times \Zcal \times \RR$, modeled as i.i.d.\ copies of $(X,Z,Y)$.
In many cases, the unlabeled data are plentiful, whereas the labeled data are very few due to the cost of obtaining labels.
In this setting, we will use contrastive learning on the unlabeled data to learn a representation that ultimately simplifies the downstream supervised learning task which uses the labeled data.
(In particular, the downstream supervised learning will be accomplished using just \emph{linear} predictors that, we prove, are competitive even with non-linear predictors.)

\subsection{Contrastive distribution}

Following \citet{TKH20}, define the \emph{contrastive distribution} $\Dcontrast$ via the following process:
\begin{itemize}
  \item Let $(\tilde X,\tilde Z)$ be an independent copy of $(X,Z)$, so $(X,Z),(\tilde{X}, \tilde{Z}) \iidsim p_{X,Z}$.
	\item Independently toss a fair coin; if heads, output $(X,Z,1)$; otherwise, output $(X,\tilde Z,-1)$.
\end{itemize}
We let $(X_c,Z_c,Y_c) \sim \Dcontrast$.
Note that $Y_c$ has nothing to do with the random variable $Y$; it is simply the outcome of the fair coin in the generative process above.
Therefore, sampling from $\Dcontrast$ can be accomplished using the process described above as long as one can sample from $p_{X,Z}$---the distribution of unlabeled data.
In practice, this process provides a way to create a self-supervised data set of $(x_c,z_c,y_c)$ triples using only unlabeled data.

\subsection{Contrastive learning problem}
\label{section:clp}

The goal of the contrastive learning problem is to find a predictor of $Y_c$ from $(X_c,Z_c)$; that is, predict whether the two views $(X_c,Z_c)$ are from the same data point (i.e., $(X_c,Z_c) \sim p_{X,Z}$), or from two independent data points (i.e., $(X_c,Z_c) \sim p_X \otimes p_Z$).
This is a binary classification problem, and a standard approach for doing this is to minimize the expected logistic loss between the label $Y_c$ and our prediction $f(X_c,Z_c)$.
Formally, we solve the following optimization problem:
\begin{align}
  f^\star & \ \in \ \argmin_{f \colon \Xcal \times \Zcal \to \RR} L_{\landmark}(f) ,
\qquad 
L_{\landmark}(f) \ := \ \EE \left[ \log \left(1+ \exp\left(- Y_c f(X_c,Z_c) \right)  \right)  \right].
\label{eqn:cross-entropy-contrastive}
\end{align}
(The $\landmark$ subscript is for ``landmark,'' which will be  discussed later.)
Note that the optimal solution $f^\star$ to \eqref{eqn:cross-entropy-contrastive} (over all functions from $\Xcal \times \Zcal$ to $\RR$) predicts the pointwise mutual information between two views:
\[ f^\star(x, z) \ = \ \log \frac{p_{X,Z}(x,z)}{p_{X}(x) p_{Z}(z)} . \]
Given $f^\star$, it is easy to compute the density ratio of the joint distribution of $X$ and $Z$ and the product of their marginals:
\[ g^\star(x,z) \ := \ \exp\left( f^\star(x, z) \right) \ = \   \frac{p_{X,Z}(x,z)}{p_{X}(x) p_{Z}(z)} . \]
Of course, in practice, we typically cannot minimize \eqref{eqn:cross-entropy-contrastive} over all (measurable) functions directly.
Instead, we may use an empirical approximation to the objective based on a finite sample, and (attempt to) minimize this approximation over a particular class of functions (e.g., neural networks).
The issues that arise from such discrepancies are important, but a detailed study is beyond the scope of the present work.
In Section~\ref{section:error-analysis}, we analyze how errors introduced due to these concerns can affect performance in the downstream supervised learning task.

\subsection{Redundancy}

We will assume that for the task of predicting $Y$, there is a certain amount of redundancy between $X$ and $Z$. That is, the quantities
\begin{align*}
\veps_X \ &:= \ \EE \left[ \left( \EE[Y \mid X] - \EE[Y \mid X, Z] \right)^2 \right] \quad \text{ and} \quad
\veps_Z \ := \ \EE \left[ \left( \EE[Y \mid Z] - \EE[Y \mid X, Z] \right)^2 \right] 
\end{align*}
are both small.
As our results hold for any values of $\veps_X$ and $\veps_Z$, we opt not to formalize the redundancy assumption at a particular scale. However, our results are most meaningful in those cases where $\veps_X$ and $\veps_Z$ are small.
We also stress that these predictors ($x \mapsto \EE[Y \mid X=x]$, $z \mapsto \EE[Y \mid Z=z]$, $(x,z) \mapsto \EE[Y \mid (X,Z)=(x,z)]$) are \emph{not} assumed to be linear.

Intuitively, when the redundancy assumption holds, one should be able to get a good prediction on $Y$ by first predicting $Z$ from $X$, and then using the resulting information to predict $Y$. This strategy is formalized by the following function (see the left panel of Figure~\ref{fig:main}):
\[ \mu(x) \ := \ \EE[\EE[Y \mid Z] \mid X=x]. \]
The following lemma tells us that this strategy does indeed work.
\begin{lemma}
\label{lemma:mu_X-approximation}
$\EE[ (\mu(X) - \EE[Y \mid X,Z])^2 ] \ \leq \ \veps_X + 2\sqrt{\veps_X\veps_Z} + \veps_Z \ =: \ \veps_\mu$.
\end{lemma}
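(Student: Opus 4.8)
The plan is to expand the left-hand side $\EE[(\mu(X) - \EE[Y \mid X,Z])^2]$ by inserting the intermediate term $\EE[Y \mid X]$, then control the two resulting pieces separately. Concretely, write
\[
\mu(X) - \EE[Y \mid X,Z] \ = \ \bigl(\mu(X) - \EE[Y \mid X]\bigr) + \bigl(\EE[Y \mid X] - \EE[Y \mid X,Z]\bigr),
\]
so that by the inequality $(a+b)^2 \le a^2 + 2|a||b| + b^2$ and Cauchy–Schwarz,
\[
\EE[(\mu(X) - \EE[Y \mid X,Z])^2] \ \le \ \EE[A^2] + 2\sqrt{\EE[A^2]\,\EE[B^2]} + \EE[B^2],
\]
where $A := \mu(X) - \EE[Y \mid X]$ and $B := \EE[Y \mid X] - \EE[Y \mid X,Z]$. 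By definition, $\EE[B^2] = \veps_X$. So the crux is to show $\EE[A^2] \le \veps_Z$.

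To bound $\EE[A^2]$, observe that $\mu(X) = \EE[\EE[Y \mid Z] \mid X]$ and, by the tower property, $\EE[Y \mid X] = \EE[\EE[Y \mid X,Z] \mid X]$. Hence
\[
A \ = \ \EE\bigl[\,\EE[Y \mid Z] - \EE[Y \mid X,Z] \,\bigm|\, X\,\bigr],
\]
i.e.\ $A$ is the conditional expectation given $X$ of the random variable $D := \EE[Y \mid Z] - \EE[Y \mid X,Z]$, which has $\EE[D^2] = \veps_Z$. Since conditional expectation is an $L^2$-contraction (it is the orthogonal projection onto the $\sigma$-algebra generated by $X$), we get $\EE[A^2] = \EE[(\EE[D \mid X])^2] \le \EE[D^2] = \veps_Z$. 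Plugging $\EE[A^2] \le \veps_Z$ and $\EE[B^2] = \veps_X$ into the display above yields
\[
\EE[(\mu(X) - \EE[Y \mid X,Z])^2] \ \le \ \veps_Z + 2\sqrt{\veps_Z \veps_X} + \veps_X,
\]
which is exactly the claimed bound $\veps_\mu$.

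There is no real obstacle here; the only thing to be careful about is the identification of $A$ as a conditional expectation of $D$ given $X$ — this uses the tower property $\EE[\EE[Y \mid X,Z] \mid X] = \EE[Y \mid X]$ together with linearity of conditional expectation — and then invoking Jensen's inequality (equivalently, the contractivity of conditional expectation in $L^2$) to pass from $\EE[D^2] = \veps_Z$ to $\EE[A^2] \le \veps_Z$. The decomposition step requires only the elementary inequality $(a+b)^2 \le a^2 + 2|a||b| + b^2$ and Cauchy–Schwarz on the cross term $\EE[AB] \le \sqrt{\EE[A^2]\EE[B^2]}$. Everything else is bookkeeping, so I would present the argument in these three short moves: triangle-type split, $L^2$-contraction bound on $\EE[A^2]$, recombine.
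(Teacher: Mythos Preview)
Your proof is correct and follows essentially the same approach as the paper: both split via the intermediate term $\EE[Y\mid X]$, bound $\EE[(\mu(X)-\EE[Y\mid X])^2]\le\veps_Z$ using the tower property and Jensen's inequality, and then combine. The only cosmetic difference is that the paper handles the cross term via the AM/GM inequality $(a+b)^2\le(1+1/\lambda)a^2+(1+\lambda)b^2$ and optimizes over $\lambda$, whereas you use Cauchy--Schwarz directly; these yield the identical bound.
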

Due to space constraints, all proofs are deferred to the appendix.

The function $\mu$ is related to our contrastive function $g^\star$ in the following way:
\begin{align*}
  \mu(x) 
  \ = \ \int \EE[Y \mid Z=z] \frac{p_{X,Z}(x,z)}{p_X(x)} \dif z 
  \ = \ \int \EE[Y \mid Z=z] g^\star(x,z) p_Z(z) \dif z .
\end{align*}
Thus, $g^\star$ provides the change-of-measure from the marginal distribution of $Z$ to the conditional distribution of $Z$ given $X=x$.
Note that $g^\star$ depends only on (the distributions of) $X$ and $Z$; it does not depend on $Y$ at all.
Therefore $g^\star$ is useful for all prediction targets $Y$ for which the redundancy assumption holds.

\section{Landmark embedding representations}
\label{section:landmark-representations}

How should one use $g^\star$ to produce a finite dimensional embedding of $x \in \Xcal$?
When $g^\star$ is implemented using a neural network, a common approach is construct a mapping defined by some internal hidden units (e.g., removing the top layer or two of the network).
How well the resulting embedding fares in downstream tasks may depend on details of the implementation, such as the specific architecture and connection weights.

A different, and generic, approach to constructing an embedding from $g^\star$, proposed by \citet{TKH20}, is to use the external behavior of $g^\star$ on a random sample $Z_1, \dotsc, Z_m$ of views, called \emph{landmarks}, and embed according to
\[ \varphi^\star(x) \ := \ \left( g^\star(x, Z_1), \ldots, g^\star(x,Z_m) \right). \]
We assume $Z_1,\dotsc,Z_m$ are taken from i.i.d.~copies $(X_1,Z_1,Y_1),\dotsc,(X_m,Z_m,Y_m)$ of $(X,Z,Y)$.
In practice, these landmarks can be taken from a random sample of $m$ unlabeled data points.
(One can, of course, construct an embedding of $z \in \Zcal$ symmetrically.)

To gain intuition on why this approach is sound, note that if $w \in \RR^m$ satisfies $w_i = \frac{1}{m}\EE[Y_i \mid Z_i]$, then as $m \rightarrow \infty$ we have
\begin{align*}
w^\top \varphi^\star(x) 
\ = \  \frac{1}{m} \sum_{i=1}^m \EE[Y_i \mid Z_i] g^\star(x, Z_i)  
\ \rightarrow \  \int \EE[Y \mid Z=z] g^\star(x,z) p_Z(z) \dif z \ = \ \mu(x).
\end{align*}
Thus, in the limit, $\varphi^\star(x)$ provides a useful representation for downstream \emph{linear} prediction under the redundancy assumption. 
In this section, we analyze the approximation error that arises when we are restricted to a finite number of landmarks. 

\subsection{Landmark embedding error}

The following lemma quantifies the error from using finite-dimensional landmark embeddings.

\begin{lemma}
\label{lemma:landmark-representation-error}
Let $(X_1,Z_1,Y_1),\dotsc,(X_m,Z_m,Y_m),(X,Z,Y)$ be i.i.d., and define $\varphi^\star$ using the landmarks $Z_1,\dotsc,Z_m$.
With probability $1-\delta$, there exists a weight vector $w \in \RR^{m}$ such that
\begin{align*}
\EE[(w^\top \varphi^\star(X) -  \mu(X))^2 \mid Z_1,\dotsc,Z_m] \ \leq \ \veps_{\landmark}, \quad \text{where} \quad \veps_{\landmark} \ := \ \frac{2\var(\EE[Y_1 \mid Z_1 ] g^\star(X,Z_1))}{\lfloor m/\log_2(1/\delta) \rfloor}.
\end{align*}
\end{lemma}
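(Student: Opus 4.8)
The plan is to exhibit an explicit family of candidate weight vectors and then use a median-of-means style boosting argument to turn an in-expectation guarantee into a high-probability one with only logarithmic dependence on $1/\delta$. For a subset $S \subseteq \{1,\dots,m\}$ of landmarks, let $w^{S} \in \RR^{m}$ be the vector with $w^{S}_i = \frac{1}{|S|}\EE[Y_i \mid Z_i]$ for $i \in S$ and $w^{S}_i = 0$ otherwise, so that $(w^{S})^\top \varphi^\star(x) = \frac{1}{|S|}\sum_{i\in S}\EE[Y_i\mid Z_i]\,g^\star(x,Z_i)$. Using the identity $\mu(x) = \int \EE[Y\mid Z=z]\,g^\star(x,z)\,p_Z(z)\dif z$ derived above, together with the fact that each $Z_i$ is distributed as $p_Z$, each summand has conditional mean $\EE_{Z_i}[\EE[Y_i\mid Z_i]\,g^\star(x,Z_i)] = \mu(x)$. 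Hence, for fixed $x$, the quantity $(w^{S})^\top\varphi^\star(x) - \mu(x)$ is an average of $|S|$ i.i.d.\ mean-zero random variables, and therefore
\[
\EE\big[\big((w^{S})^\top\varphi^\star(x) - \mu(x)\big)^2\big] \ = \ \frac{1}{|S|}\var\!\big(\EE[Y\mid Z]\,g^\star(x,Z)\big).
\]
Integrating over an independent draw of $X$ and applying the law of total variance (since $X$ is independent of the landmarks, $\EE_X[\var(\EE[Y\mid Z]g^\star(X,Z))] \le \var(\EE[Y_1\mid Z_1]g^\star(X,Z_1)) =: V$), we obtain $\EE[\EE[((w^{S})^\top\varphi^\star(X) - \mu(X))^2 \mid (Z_i)_{i\in S}]] \le V/|S|$, so by Markov's inequality the inner conditional expectation exceeds $2V/|S|$ with probability at most $1/2$ over the landmarks in $S$.

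Next, partition $\{1,\dots,m\}$ into $k := \lceil \log_2(1/\delta)\rceil$ disjoint blocks $S_1,\dots,S_k$, each of size at least $n := \lfloor m/k\rfloor$. Because distinct blocks use disjoint (hence independent) landmarks, the events $\mathcal{B}_j := \{\EE[((w^{S_j})^\top\varphi^\star(X) - \mu(X))^2 \mid Z_1,\dots,Z_m] > 2V/n\}$ are mutually independent, each of probability at most $1/2$; thus $\Pr[\mathcal{B}_1 \cap \dots \cap \mathcal{B}_k] \le 2^{-k} \le \delta$. On the complementary event at least one block $S_{j^\star}$ satisfies the bound, and setting $w := w^{S_{j^\star}}$ (viewed as an element of $\RR^m$ by padding with zeros) gives $\EE[(w^\top\varphi^\star(X) - \mu(X))^2 \mid Z_1,\dots,Z_m] \le 2V/n$, which is $\veps_{\landmark}$ up to the exact form of the rounding written in the statement. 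Note that $w$ is a deterministic function of $Z_1,\dots,Z_m$, so conditioning on the landmarks the left-hand side is an ordinary number and the statement is precisely a claim about the randomness in the landmark draw.

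The crux is the boosting step: applying Markov's inequality directly to the single estimator $w^{\{1,\dots,m\}}$ would only give a failure probability of $1/t$ at the cost of inflating $\veps_{\landmark}$ by a factor of $t$, which is far weaker than the $\log_2(1/\delta)$ dependence the lemma asks for. Splitting the landmarks into independent blocks and keeping whichever block happens to be good — which is legitimate because we only need the \emph{existence} of a suitable $w$, not a way to identify it from data — is what delivers the exponential improvement in confidence. A minor but necessary point is to keep the conditioning straight, since $w$ itself depends on the landmarks; everything else (the mean computation for $\mu$, the variance of an i.i.d.\ average, and the law of total variance) is routine, and one assumes throughout that $V < \infty$.
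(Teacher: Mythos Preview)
Your proposal is correct and follows essentially the same approach as the paper: construct the explicit weight vector $w^S_i = \tfrac{1}{|S|}\EE[Y_i\mid Z_i]$ on a block $S$, use the i.i.d.\ mean-zero structure to get an in-expectation bound $V/|S|$, apply Markov's inequality for a single block, and boost by partitioning the landmarks into $\Theta(\log_2(1/\delta))$ independent blocks. The only cosmetic differences are that you invoke the law of total variance explicitly (the paper writes the corresponding step as an equality, which is really an inequality but harmless for the bound) and you partition by fixing the number of blocks first rather than the block size, which you correctly flag as a rounding discrepancy.
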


The upshot of Lemma~\ref{lemma:landmark-representation-error} is that to get an embedding that can linearly approximate $\mu$ within squared loss error of $\epsilon$, it suffices to embed using no more than
\[ O\del{ \frac{\var(\EE[Y_1 \mid Z_1 ]g^\star(X,Z_1))}{\epsilon} } \]
landmarks.
Thus, we have the following immediate consequence of Lemma~\ref{lemma:mu_X-approximation} and Lemma~\ref{lemma:landmark-representation-error} (and the AM/GM inequality).

\begin{theorem}
\label{theorem:landmark-embedding-error}
Let $(X_1,Z_1,Y_1),\dotsc,(X_m,Z_m,Y_m),(X,Z,Y)$ be i.i.d., and define $\varphi^\star$ using the landmarks $Z_1,\dotsc,Z_m$.
With probability $1-\delta$, there exists a weight vector $w \in \RR^m$ such that
\begin{equation*}
  \EE[(w^\top \varphi^\star(X) -  \EE[Y\mid X, Z])^2 \mid Z_1,\dotsc,Z_m] \ \leq \ \veps_\mu + 2\sqrt{\veps_\mu \veps_{\landmark}} + \veps_{\landmark}
\end{equation*}
where $\veps_\mu$ is defined in Lemma~\ref{lemma:mu_X-approximation} and $\veps_{\landmark}$ is defined in Lemma~\ref{lemma:landmark-representation-error}.
\end{theorem}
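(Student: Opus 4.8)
The theorem is just a triangle-inequality argument in $L^2$ that glues Lemma~\ref{lemma:mu_X-approximation} to Lemma~\ref{lemma:landmark-representation-error}; all the real work is already done in those two lemmas. The plan is to fix the landmarks $Z_1,\dotsc,Z_m$, work in the Hilbert space of square-integrable functions of $(X,Z,Y)$ equipped with the conditional norm $\|g\| := \EE[g^2 \mid Z_1,\dotsc,Z_m]^{1/2}$, and compare the three functions $A := w^\top\varphi^\star(X)$ (for a $w$ to be chosen), $B := \mu(X)$, and $C := \EE[Y \mid X,Z]$. The quantity to be bounded is $\|A - C\|^2$.

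First I would invoke Lemma~\ref{lemma:landmark-representation-error}: there is an event $E$, measurable with respect to $Z_1,\dotsc,Z_m$, with $\Pr(E) \ge 1-\delta$, on which some $w \in \RR^m$ satisfies $\|A - B\|^2 \le \veps_{\landmark}$. Second, I would observe that $(B-C)^2 = (\mu(X) - \EE[Y \mid X,Z])^2$ is a function of $(X,Z,Y)$ alone, and $(X,Z,Y)$ is independent of the landmarks by the i.i.d.\ assumption; hence $\|B - C\|^2 = \EE[(\mu(X) - \EE[Y \mid X,Z])^2] \le \veps_\mu$ by Lemma~\ref{lemma:mu_X-approximation}, with no loss from the conditioning. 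Third, on $E$ I would apply Minkowski's inequality and square:
\[ \|A - C\|^2 \ \le \ \big(\|A-B\| + \|B-C\|\big)^2 \ \le \ \big(\sqrt{\veps_{\landmark}} + \sqrt{\veps_\mu}\big)^2 \ = \ \veps_\mu + 2\sqrt{\veps_\mu \veps_{\landmark}} + \veps_{\landmark}, \]
which is exactly the asserted conditional bound, and it holds on $E$, i.e., with probability at least $1-\delta$. (If one instead wants the coarser bound $2(\veps_\mu + \veps_{\landmark})$, AM/GM on the cross term does it.)

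There is essentially no obstacle here beyond bookkeeping: one only needs to check that the high-probability event of Lemma~\ref{lemma:landmark-representation-error} lives in the $\sigma$-algebra generated by the landmarks (so that ``with probability $1-\delta$ there exists $w$'' is meaningful) and that Lemma~\ref{lemma:mu_X-approximation}, stated for an unconditional expectation, transfers verbatim to the conditional expectation given the landmarks; both are immediate from the i.i.d.\ assumption on $(X_1,Z_1,Y_1),\dotsc,(X_m,Z_m,Y_m),(X,Z,Y)$. Everything else is the elementary identity $(\sqrt a + \sqrt b)^2 = a + 2\sqrt{ab} + b$.
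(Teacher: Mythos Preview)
Your proposal is correct and matches the paper's approach: the paper states only that the theorem is ``an immediate consequence of Lemma~\ref{lemma:mu_X-approximation} and Lemma~\ref{lemma:landmark-representation-error} (and the AM/GM inequality),'' which is precisely your triangle-inequality/Minkowski combination in $L^2$ (the AM/GM step and squaring the Minkowski bound both produce $\veps_\mu + 2\sqrt{\veps_\mu\veps_{\landmark}} + \veps_{\landmark}$). Your bookkeeping about the conditional versus unconditional expectation of $(\mu(X)-\EE[Y\mid X,Z])^2$ is exactly the point that makes the glueing work.
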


\subsection{Topic model}
\label{section:tm}

We now turn to a simple topic modeling example to illustrate the error bound.

Let $P_1,\dotsc,P_K$ be distributions over a finite vocabulary $\Vcal$.
Each distribution corresponds to a topic, and we assume that their supports are disjoint; this is similar to the setting studied by~\citet{papadimitriou2000latent}.
For simplicity, suppose each document is exactly two tokens long, so that the two views $(X,Z)$ are individual tokens (and $\Xcal = \Zcal = \Vcal$).
We assume these tokens are drawn from a random mixture of $P_1,\dotsc,P_K$, where the mixing weights are drawn from a symmetric Dirichlet distribution with parameter $\alpha>0$~\citep[following the LDA model of][]{blei2003latent}.
Thus the generative model for a single document is:
\begin{itemize}
  \item Draw $\Theta = (\Theta_1,\dotsc,\Theta_K) \sim \operatorname{Dirichlet}(\alpha)$.
  \item Given $\Theta$, draw $X$ and $Z$ independently from the mixture distribution $\sum_{k=1}^K \Theta_k P_k$.
\end{itemize}

\begin{proposition}
  \label{prop:tm-landmark}
  Assume that $Y$ takes values in $[-1,1]$.
  In the topic model setting,
  \begin{align*}
    \veps_{\landmark}
    & \ \leq \
    \begin{cases}
      \displaystyle
      O\left( \log(1/\delta)/m \right)
      & \text{if $\alpha = \Theta(1)$ as $K\to\infty$} ; \\
      \displaystyle
      O\left( K^2\log(1/\delta)/m \right)
      & \text{if $\alpha \leq 1/K$} .
    \end{cases}
  \end{align*}
\end{proposition}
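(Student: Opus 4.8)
The plan is to bound $\veps_{\landmark}$ directly from its definition in Lemma~\ref{lemma:landmark-representation-error}, which only requires controlling the function $g^\star$ in the topic model. Since $Y \in [-1,1]$ we have $|\EE[Y_1 \mid Z_1]| \le 1$ almost surely, so
\[
\var\bigl(\EE[Y_1 \mid Z_1]\, g^\star(X, Z_1)\bigr) \ \le \ \EE\bigl[ g^\star(X,Z_1)^2 \bigr] \ \le \ \Bigl( \sup_{x,z} g^\star(x,z) \Bigr)^2 ,
\]
where the supremum runs over the support of $p_X \otimes p_Z$ (recall $X$ and $Z_1$ are independent, each distributed as the corresponding view). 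Thus it suffices to give a closed form for $g^\star$ in the topic model and read off its maximum value.

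To compute $g^\star$, I would exploit the disjoint-support assumption on $P_1,\dots,P_K$: every token $x$ in the vocabulary lies in the support of a unique topic, say $k(x)$, so $p_{X \mid \Theta}(x \mid \theta) = \theta_{k(x)} P_{k(x)}(x)$. Marginalizing over $\Theta \sim \operatorname{Dirichlet}(\alpha)$ with $\EE[\Theta_k] = 1/K$ gives $p_X(x) = \tfrac{1}{K} P_{k(x)}(x)$, and conditional independence of the two views given $\Theta$ gives $p_{X,Z}(x,z) = \EE[\Theta_{k(x)} \Theta_{k(z)}]\, P_{k(x)}(x) P_{k(z)}(z)$. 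Hence
\[
g^\star(x,z) \ = \ \frac{p_{X,Z}(x,z)}{p_X(x)\, p_Z(z)} \ = \ K^2\, \EE\bigl[\Theta_{k(x)} \Theta_{k(z)}\bigr] ,
\]
which depends on $(x,z)$ only through whether $k(x) = k(z)$. Inserting the symmetric-Dirichlet second moments $\EE[\Theta_k^2] = \tfrac{\alpha+1}{K(K\alpha+1)}$ and $\EE[\Theta_j \Theta_k] = \tfrac{\alpha}{K(K\alpha+1)}$ for $j \ne k$, I get $g^\star(x,z) = \tfrac{K(\alpha+1)}{K\alpha+1}$ when $k(x) = k(z)$ and $g^\star(x,z) = \tfrac{K\alpha}{K\alpha+1} < 1$ otherwise; in particular $\sup_{x,z} g^\star(x,z) = \tfrac{K(\alpha+1)}{K\alpha+1}$.

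Substituting into $\veps_{\landmark} = 2\var(\EE[Y_1 \mid Z_1]\,g^\star(X,Z_1)) / \lfloor m/\log_2(1/\delta)\rfloor$ and using $\lfloor m/\log_2(1/\delta)\rfloor = \Omega(m/\log(1/\delta))$ then finishes the argument: if $\alpha = \Theta(1)$ as $K \to \infty$ then $\tfrac{K(\alpha+1)}{K\alpha+1} \le \tfrac{\alpha+1}{\alpha} = O(1)$, whereas if $\alpha \le 1/K$ then $\tfrac{K(\alpha+1)}{K\alpha+1} \le K(\alpha+1) \le K+1 = O(K)$, and squaring yields the two claimed rates. There is no serious obstacle here — the only care needed is in the bookkeeping of the disjoint-support structure and the Dirichlet moments; as a sanity check, the same computation in fact gives the exact value $\EE[g^\star(X,Z_1)^2] = 1 + (K-1)/(K\alpha+1)^2$, confirming that the crude supremum bound loses only constant factors.
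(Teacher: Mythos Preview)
Your argument is correct. You compute $g^\star$ exactly in closed form using only the second moments of the symmetric Dirichlet, observe that its value depends only on whether the two tokens share a topic, and then bound the variance by $(\sup g^\star)^2$; the case analysis on $\alpha$ is clean and your sanity-check value $\EE[g^\star(X,Z_1)^2] = 1 + (K-1)/(K\alpha+1)^2$ is right.

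The paper takes a different, more indirect route: it first applies a Jensen-type inequality (Proposition~\ref{prop:jensen}) to bound $g^\star(x,z)^2$ by $\EE[(p_{X\mid\Theta}(x\mid\Theta)p_{Z\mid\Theta}(z\mid\Theta)/(p_X(x)p_Z(z)))^2]$, and then bounds the latter via the \emph{fourth} moments of the Dirichlet (Proposition~\ref{prop:tm}). Your approach is more elementary, needing only second moments and no auxiliary Jensen step; indeed the paper explicitly remarks that a direct second-moment analysis of $\EE[g^\star(X,Z_1)^2]$ is possible and yields the same final bound. The paper's detour pays off only in that Proposition~\ref{prop:tm} is reused verbatim for the direct-embedding bound (Proposition~\ref{prop:tm-prob}), where the fourth-moment quantity arises naturally from Lemma~\ref{lemma:general-direct-embedding}; your route trades that reuse for a shorter and sharper standalone argument here.
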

The $\alpha = \Theta(1)$ and $\alpha \leq 1/K$ correspond to the ``non-sparse'' and ``sparse'' regimes of LDA; here, sparsity is considered in an approximate sense~\citep{telgarsky2013dirichlet}.

\subsection{Gaussian model}
\label{section:gaussian}

As another example, we consider a simple multi-view Gaussian latent variable model
\begin{align*}
  H & \sim \Ncal(0,\sigma^2) , &
  X \mid H & \sim \Ncal(H,1) , &
  Z \mid H & \sim \Ncal(H,1) ,
\end{align*}
and we assume $X \independent Z \mid H$.

\begin{proposition}
  \label{prop:gaussian-landmark}
  Assume $Y$ takes values in $[-1,1]$.
  In the Gaussian model setting, for any $\sigma^2>0$,
  \begin{align*}
    \veps_{\landmark}
    \ = \ O\left(\frac{\log(1/\delta)}{m}\right) \cdot \del{ 1 + \frac{\sigma^4}{1+2\sigma^2} } .
  \end{align*}
\end{proposition}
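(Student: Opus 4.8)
The plan is to bound the variance quantity $\var(\EE[Y_1\mid Z_1]\,g^\star(X,Z_1))$ that appears in the definition of $\veps_{\landmark}$ in Lemma~\ref{lemma:landmark-representation-error}, and then specialize to the Gaussian model. First I would note that in the landmark embedding $\varphi^\star(X) = (g^\star(X,Z_1),\dots,g^\star(X,Z_m))$ the test point $X$ and the landmark $Z_1$ come from distinct i.i.d.\ copies of $(X,Z)$, so the pair $(X,Z_1)$ is distributed as $p_X\otimes p_Z$ and is independent of $Y_1$ given $Z_1$. Since $Y\in[-1,1]$ forces $\EE[Y_1\mid Z_1]\in[-1,1]$ almost surely, I would drop the mean and then this bounded factor to get
\[
\var\!\big(\EE[Y_1\mid Z_1]\,g^\star(X,Z_1)\big)\ \le\ \EE\!\big[(\EE[Y_1\mid Z_1])^2\, g^\star(X,Z_1)^2\big]\ \le\ \EE_{p_X\otimes p_Z}\!\big[g^\star(X,Z)^2\big].
\]
Using $g^\star = p_{X,Z}/(p_X\otimes p_Z)$, the right-hand side equals $\iint \frac{p_{X,Z}(x,z)^2}{p_X(x)p_Z(z)}\,\dif x\,\dif z = 1 + \chi^2(p_{X,Z}\,\|\,p_X\otimes p_Z)$; equivalently, by a change of measure, it equals $\EE_{p_{X,Z}}[g^\star(X,Z)]$.

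Second, I would evaluate this $\chi^2$-type integral for the Gaussian model. Here $(X,Z)$ is jointly Gaussian with mean zero, $\var(X)=\var(Z)=1+\sigma^2$, and $\cov(X,Z)=\sigma^2$, hence correlation $\rho = \sigma^2/(1+\sigma^2)$. Since $\iint p_{X,Z}^2/(p_Xp_Z)$ is invariant under rescaling each coordinate, it suffices to compute it for the standard bivariate normal with correlation $\rho$; a short Gaussian-integral calculation gives the value $1/(1-\rho^2)$, the point being that the matrix of the quadratic form appearing in $p_{X,Z}^2/(p_Xp_Z)$ has determinant exactly $1$, so the normalizing constants collapse. (Alternatively this follows from Mehler's formula $g^\star(x,z)=\sum_{k\ge 0}\rho^k h_k(x)h_k(z)$ in the orthonormal Hermite basis together with Parseval, giving $\sum_{k\ge0}\rho^{2k} = 1/(1-\rho^2)$.) Substituting $\rho = \sigma^2/(1+\sigma^2)$ yields $1-\rho^2 = (1+2\sigma^2)/(1+\sigma^2)^2$, so $1/(1-\rho^2) = (1+\sigma^2)^2/(1+2\sigma^2) = 1 + \sigma^4/(1+2\sigma^2)$.

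Finally, plugging $\var(\EE[Y_1\mid Z_1]\,g^\star(X,Z_1)) \le 1 + \sigma^4/(1+2\sigma^2)$ into $\veps_{\landmark} = 2\var(\cdot)/\lfloor m/\log_2(1/\delta)\rfloor$ and using $\lfloor m/\log_2(1/\delta)\rfloor \ge m/(2\log_2(1/\delta))$ (valid once $m\ge \log_2(1/\delta)$) gives $\veps_{\landmark} = O(\log(1/\delta)/m)\cdot\big(1 + \sigma^4/(1+2\sigma^2)\big)$, as claimed. The only genuine obstacle is the evaluation of the Gaussian $\chi^2$-integral; everything else is bookkeeping. That computation is elementary but one must track the $(1-\rho^2)$ factors carefully — invoking Mehler's formula sidesteps the integral entirely, at the cost of bringing in the Hermite expansion.
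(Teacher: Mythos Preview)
Your proposal is correct and targets exactly the same intermediate quantity as the paper: both bound the variance by $\EE_{p_X\otimes p_Z}[g^\star(X,\tilde Z)^2]$ using $|Y|\le 1$, and both evaluate this second moment to $(1+\sigma^2)^2/(1+2\sigma^2)$. The only difference is in how that Gaussian integral is computed. The paper writes out $g^\star(x,z)^2$ explicitly as $\tfrac{(1+\sigma^2)^2}{1+2\sigma^2}\exp(v^\top A v)$, diagonalizes $A$, and then evaluates the expectation under $p_X\otimes p_Z$ as a product of two chi-squared moment generating functions. You instead use the scale invariance of $\iint p_{X,Z}^2/(p_Xp_Z)$ to reduce to the standard bivariate normal with correlation $\rho=\sigma^2/(1+\sigma^2)$, and then either note that the relevant quadratic form has determinant one (so the integral collapses to $1/(1-\rho^2)$) or appeal to Mehler's expansion and Parseval. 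Your route is a bit cleaner and makes the dependence on the correlation transparent; the paper's is more self-contained and avoids invoking Mehler's formula. Either way the arithmetic checks out: $1/(1-\rho^2) = (1+\sigma^2)^2/(1+2\sigma^2) = 1+\sigma^4/(1+2\sigma^2)$.
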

The analysis shows that the variance of $H$ captures the difficulty of obtaining a low-dimensional representation in this model.
This is intuitive, as the density ratio $g^\star(x,z)$ between the joint distribution and product of marginals scales with the variance of $H$.
We will revisit this example when we consider direct embeddings in Section~\ref{section:direct-embeddings}.

\section{Direct embeddings under hidden variable structure}
\label{section:direct-embeddings}

The results of Section~\ref{section:landmark-representations} demonstrate that contrastive learning, coupled with the landmark embedding method, produces representations that are useful for linear prediction. However, in practice, contrastive learning is not coupled with some landmark embedding. Rather, as an alternative to learning the bivariate function $f^\star(\cdot,\cdot)$, practitioners directly optimize $\RR^m$-valued embedding functions and use these for downstream learning tasks. A typical approach is to solve the following optimization problem:
\begin{align}
  (\eta^\star, \psi^\star) & \in \argmin_{\eta, \psi} L_{\direct}((x,z) \mapsto \eta(x)^\top \psi(z)) , &
L_{\direct}(f) & := \EE \left[ \log \left(1+ \tfrac{1}{f(X_c,Z_c)^{Y_c}} \right) \right].
\label{eqn:cross-entropy-direct}
\end{align}
Here, the minimization is over functions $\eta \colon \Xcal \to \RR^m$ and $\psi \colon \Zcal \to \RR^m$ for some embedding dimension $m$.
Note that this is the same as minimizing the cross-entropy loss when our prediction on $(x,z) \in \Xcal \times \Zcal$ is $\log(\eta(x)^\top \psi(z))$. The loss in \eqref{eqn:cross-entropy-direct} bears a resemblance to the contrastive losses proposed by~\citet{HCL06}, but differing in the fact that (a) the embedding functions are allowed to differ in the two views and (b) inner products are used as opposed to distances. 

Of crucial importance here is the size of the embedding dimension $m$ needed to guarantee good performance on downstream linear predictions. In this section, we tackle this problem in the setting where there is some hidden variable $H$ that renders the two views $X$ and $Z$ conditionally independent. Note that, in general, there is always such a random variable (by taking $H = (X,Z)$); however there may be a much more succinct hidden variable structure, and when this is the case, we show that relatively low-dimensional embeddings can achieve good predictive performance.

\subsection{Discrete hidden variables}

When $H$ is a discrete random variable, taking values in some finite set $S$, \citet{TKH20} observed that we may write
\begin{align*}
g^\star(x,z) \ = \ \eta^\star(x)^\top \psi^\star(z)
\end{align*}
where $\eta^\star \colon \Xcal \rightarrow \RR^{|S|}$ and $\psi^\star \colon \Zcal \rightarrow \RR^{|S|}$ satisfy
\begin{align*}
\eta^\star(x) \ = \ (\Pr(H=h \mid X=x))_{h \in S}  \quad \text{ and } \quad
\psi^\star(z) \ = \ \frac{1}{p_Z(z)}(p_{Z \mid H}(z \mid h))_{h \in S}.
\end{align*}
It is not too hard to show that there is a linear function of $\eta^\star$ that reproduces $\mu$. Namely, we may take $w \in \RR^{|S|}$ to be
\[ w \ = \ \EE[ \psi^\star(Z) \EE[Y \mid Z]] \ = \ \int \psi^\star(z) \EE[Y \mid Z = z] p_Z(z) \dif z.  \]
For this choice of $w$, we have
\begin{align*}
w^\top \eta^\star(x) \ = \ \int \eta^\star(x)^\top \psi^\star(z) \EE[Y \mid Z = z] p_Z(z) \dif z  
\ = \ \int \EE[Y \mid Z = z] g^\star(x,z)  p_Z(z) \dif z \ = \ \mu(x).
\end{align*}

Models with discrete hidden variables
include multi-view mixture models~\citep{chaudhuri2009multi,anandkumar2012method}, as well as models with richer hidden variable structure, such as hidden Markov models and phylogenetic trees~\citep{mossel2005learning,allman2009identifiability}.

\subsection{General hidden variables}

In general, there may not be a discrete random variable that makes $X$ and $Z$ conditionally independent. However, there is always some random variable that does satisfy this. How much such a random variable buys us will naturally depend on its structure and relationship with the views $X$ and $Z$. 
We will establish the existence of a low-dimensional embedding via a probabilistic construction. %
However, before we do this, we first verify that approximating $g^\star$ in a certain sense is sufficient to get good predictions.
\begin{lemma}
\label{lemma:direct-embedding-approximation}
For every $\eta \colon \Xcal \rightarrow \RR^m$ and $\psi \colon \Zcal \rightarrow \RR^m$, there exists a $w \in \RR^m$ such that
\[
  \EE[ (w^\top \eta(X) - \mu(X))^2 ]
  \ \leq \
  \EE[ Y^2 ] \cdot \veps_{\direct}(\eta,\psi)
\]
where
\[
  \veps_{\direct}(\eta,\psi) \ : = \ \EE\left[ \left( \eta(X)^\top \psi(\tilde Z) - g^\star(X,\tilde Z) \right)^2 \right]
\]
and $(X,\tilde Z) \sim p_X \otimes p_Z$.
\end{lemma}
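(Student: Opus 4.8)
The plan is to mimic the intuition already laid out for the discrete case, where the correct weight vector is $w = \EE[\psi^\star(Z)\EE[Y\mid Z]]$ and the identity $w^\top\eta^\star(x) = \mu(x)$ follows from $\eta^\star(x)^\top\psi^\star(z) = g^\star(x,z)$ together with the change-of-measure formula $\mu(x) = \int \EE[Y\mid Z=z] g^\star(x,z) p_Z(z)\dif z$. Here we only have an \emph{approximate} factorization, so the same choice of $w$ will produce $w^\top\eta(x)$ close to $\mu(x)$ rather than exactly equal, and the task is to control the gap by $\veps_{\direct}(\eta,\psi)$.

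Concretely, I would set
\[
  w \ := \ \EE[\psi(Z)\,\EE[Y\mid Z]] \ = \ \int \psi(z)\,\EE[Y\mid Z=z]\,p_Z(z)\dif z .
\]
Then for each fixed $x$,
\[
  w^\top\eta(x) \ = \ \int \eta(x)^\top\psi(z)\,\EE[Y\mid Z=z]\,p_Z(z)\dif z ,
\]
while $\mu(x) = \int g^\star(x,z)\,\EE[Y\mid Z=z]\,p_Z(z)\dif z$, so
\[
  w^\top\eta(x) - \mu(x) \ = \ \int \bigl(\eta(x)^\top\psi(z) - g^\star(x,z)\bigr)\,\EE[Y\mid Z=z]\,p_Z(z)\dif z .
\]
Recognizing the integral as an expectation over $\tilde Z \sim p_Z$ independent of $X$, this is $\EE_{\tilde Z}\bigl[(\eta(x)^\top\psi(\tilde Z) - g^\star(x,\tilde Z))\,\EE[Y\mid \tilde Z]\bigr]$. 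Now apply Cauchy--Schwarz (or Jensen) in the $\tilde Z$ expectation: the square of this quantity is at most $\EE_{\tilde Z}[(\eta(x)^\top\psi(\tilde Z) - g^\star(x,\tilde Z))^2]\cdot\EE_{\tilde Z}[\EE[Y\mid\tilde Z]^2]$. Taking expectation over $X$ and using $\EE_{\tilde Z}[\EE[Y\mid\tilde Z]^2] \le \EE[Y^2]$ (Jensen, since $\tilde Z$ has the same marginal as $Z$) gives
\[
  \EE[(w^\top\eta(X) - \mu(X))^2] \ \le \ \EE[Y^2]\cdot\EE\bigl[(\eta(X)^\top\psi(\tilde Z) - g^\star(X,\tilde Z))^2\bigr]
  \ = \ \EE[Y^2]\cdot\veps_{\direct}(\eta,\psi),
\]
where the last expectation is over $(X,\tilde Z)\sim p_X\otimes p_Z$, exactly as in the statement.

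I do not expect a serious obstacle here; the argument is essentially a change-of-measure identity followed by one Cauchy--Schwarz step. The only point requiring a little care is bookkeeping the independence structure: the weight vector $w$ is defined via an expectation over $Z$ (coupled with $Y$ through $\EE[Y\mid Z]$), but when it is dotted against $\eta(x)$ the relevant randomness becomes an \emph{independent} copy $\tilde Z$ with marginal $p_Z$, which is why the error term is phrased under $p_X\otimes p_Z$. Making sure the bound $\EE[\EE[Y\mid\tilde Z]^2]\le\EE[Y^2]$ is applied to the right marginal, and that the outer expectation over $X$ commutes past the Cauchy--Schwarz step (which it does, since Cauchy--Schwarz is applied pointwise in $x$), are the only things to state cleanly.
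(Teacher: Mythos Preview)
Your proposal is correct and follows essentially the same approach as the paper: the paper chooses the identical weight vector $w = \EE[\EE[Y\mid Z]\psi(Z)]$, expresses $w^\top\eta(X)-\mu(X)$ as a conditional expectation over an independent copy $\tilde Z$, applies Cauchy--Schwarz in $\tilde Z$, and then bounds $\EE[\EE[Y\mid\tilde Z]^2]\le\EE[Y^2]$ by Jensen. Your bookkeeping remarks about independence and the order of Cauchy--Schwarz versus the outer $X$-expectation are exactly the points the paper's proof makes explicit.
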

Thus, it suffices to find embeddings whose inner product only approximates $g^\star$ under $p_X \otimes p_Z$.

\subsubsection{A probabilistic construction}

We next show the existence of embeddings $\eta, \psi$ that are low-dimensional and that approximate $g^\star$ in the sense of Lemma~\ref{lemma:direct-embedding-approximation}.

\begin{lemma}
\label{lemma:general-direct-embedding}
Let $H$ denote a random variable that renders $X$ and $Z$ conditionally independent. For any $m > 0$, there exist $\eta \colon \Xcal \to \RR^m$ and $\psi \colon \Zcal \to \RR^m$ such that
\begin{align*}
  \EE\left[ (\eta(X)^\top \psi(\tilde Z) - g^\star(X,\tilde Z))^2 \right]  \ \leq \ \frac1m \var\left( \frac{p_{X\mid H}(X \mid \bar H) p_{Z \mid H}(\tilde Z \mid \bar H)}{p_X(X) p_Z(\tilde Z)} \right)
\end{align*}
where $(X,\tilde Z,\bar H) \sim p_X \otimes p_Z \otimes p_H$.
\end{lemma}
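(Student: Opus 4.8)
The plan is to exhibit the embeddings by a Monte Carlo / random-features construction: sample $m$ i.i.d.\ copies $\bar H_1,\dots,\bar H_m$ of $H$ (from the marginal $p_H$), and set
\[
  \eta(x) \ = \ \frac{1}{\sqrt m}\left( \frac{p_{X\mid H}(x\mid \bar H_j)}{p_X(x)} \right)_{j=1}^m,
  \qquad
  \psi(z) \ = \ \frac{1}{\sqrt m}\left( \frac{p_{Z\mid H}(z\mid \bar H_j)}{p_Z(z)} \right)_{j=1}^m .
\]
Then $\eta(x)^\top\psi(z) = \frac1m\sum_{j=1}^m \frac{p_{X\mid H}(x\mid \bar H_j)p_{Z\mid H}(z\mid \bar H_j)}{p_X(x)p_Z(z)}$ is an empirical average of $m$ i.i.d.\ terms, and the key observation is that each term has the right mean: because $H$ renders $X$ and $Z$ conditionally independent, $p_{X,Z}(x,z) = \int p_{X\mid H}(x\mid h)p_{Z\mid H}(z\mid h)\,p_H(h)\,\dif h$, so
\[
  \EE_{\bar H}\!\left[ \frac{p_{X\mid H}(x\mid \bar H)p_{Z\mid H}(z\mid \bar H)}{p_X(x)p_Z(z)} \right] \ = \ \frac{p_{X,Z}(x,z)}{p_X(x)p_Z(z)} \ = \ g^\star(x,z).
\]

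First I would fix $(x,z)$ and compute, over the randomness of $\bar H_1,\dots,\bar H_m$, the expected squared error $\EE_{\bar H_{1:m}}[(\eta(x)^\top\psi(z) - g^\star(x,z))^2]$; since this is the variance of an average of $m$ i.i.d.\ terms, it equals $\frac1m \var_{\bar H}\!\big( \frac{p_{X\mid H}(x\mid \bar H)p_{Z\mid H}(z\mid \bar H)}{p_X(x)p_Z(z)} \big)$. Next I would take expectation over $(X,\tilde Z)\sim p_X\otimes p_Z$ (independently of the $\bar H_j$'s) and apply Fubini to swap the two expectations, obtaining that the expected (over the construction) value of $\EE_{(X,\tilde Z)}[(\eta(X)^\top\psi(\tilde Z) - g^\star(X,\tilde Z))^2]$ equals $\frac1m \EE_{(X,\tilde Z)}\big[\var_{\bar H}(\cdots)\big]$. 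The final step is to note that the law of total variance (dropping the conditional-mean term) gives $\EE_{(X,\tilde Z)}[\var_{\bar H}(\cdots \mid X,\tilde Z)] \le \var(\cdots)$ where the unconditional variance is exactly the quantity in the statement with $(X,\tilde Z,\bar H)\sim p_X\otimes p_Z\otimes p_H$; hence there exists at least one realization of $\bar H_1,\dots,\bar H_m$ achieving the bound, which is the desired $\eta,\psi$.

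I expect the only genuine subtlety to be bookkeeping about which variables the expectations range over — in particular keeping straight that $\tilde Z$ is an independent copy of $Z$ (so the target is $g^\star$ under $p_X\otimes p_Z$, matching $\veps_{\direct}$ in Lemma~\ref{lemma:direct-embedding-approximation}) while the conditional-independence identity for $g^\star$ is a pointwise statement valid for every $(x,z)$, and that the $\bar H_j$ used in the construction are marginal draws, not draws conditioned on $X$ or $Z$. There is also a minor measure-theoretic point that the variance quantity must be finite for the bound to be meaningful, but this is implicitly assumed (and the statement is vacuous otherwise). Everything else is a one-line second-moment computation plus Fubini plus the probabilistic (averaging) argument to pass from "expected over the construction" to "exists a construction."
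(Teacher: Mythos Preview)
Your proposal is correct and follows essentially the same approach as the paper: the same random-features construction with i.i.d.\ copies of $H$, the same variance-of-an-average computation, and the same probabilistic argument to pass to existence. If anything, you are slightly more careful than the paper's proof, which writes the final step as an equality $\EE[B_1(X,\tilde Z)^2] = \var(\cdots)$ whereas, as you correctly note, one actually has $\EE[B_1(X,\tilde Z)^2] = \EE_{(X,\tilde Z)}[\var_{\bar H}(\cdots)] \le \var(\cdots)$ via the law of total variance; the inequality suffices for the lemma statement.
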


Note that while Lemmas~\ref{lemma:direct-embedding-approximation} and~\ref{lemma:general-direct-embedding} guarantee the existence of low-dimensional embeddings that are useful for downstream linear prediction, they do not guarantee that there exist useful minimizers of \eqref{eqn:cross-entropy-direct} for any given dimension. In Section~\ref{section:error-analysis} we discuss this issue further.

\subsubsection{Examples}

We revisit the topic model and Gaussian model examples from Section~\ref{section:tm} and Section~\ref{section:gaussian}.

\begin{proposition}
  \label{prop:tm-prob}
  Assume that $Y$ takes values in $[-1,1]$.
  In the topic model setting, there exists $\eta \colon \Xcal \to \RR^m$ and $\psi \colon \Zcal \to \RR^m$ such that
  \begin{align*}
    \veps_{\direct}(\eta,\psi)
    & \ \leq \
    \begin{cases}
      \displaystyle
      O\left( 1/m \right)
      & \text{if $\alpha = \Theta(1)$ as $K\to\infty$} ; \\
      \displaystyle
      O\left( K^2/m \right)
      & \text{if $\alpha \leq 1/K$} .
    \end{cases}
  \end{align*}
\end{proposition}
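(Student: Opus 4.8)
The plan is to apply Lemma~\ref{lemma:general-direct-embedding} with the hidden variable $H = \Theta$, which renders $X$ and $Z$ conditionally independent since, given $\Theta$, the two tokens are drawn i.i.d.\ from $\sum_k \Theta_k P_k$. The lemma then produces $\eta \colon \Vcal \to \RR^m$ and $\psi \colon \Vcal \to \RR^m$ with
\[
  \veps_{\direct}(\eta,\psi) \ \leq \ \frac{1}{m}\,\var\!\left( \frac{p_{X\mid\Theta}(X\mid\bar\Theta)\,p_{Z\mid\Theta}(\tilde Z\mid\bar\Theta)}{p_X(X)\,p_Z(\tilde Z)} \right),
\]
where $(X,\tilde Z,\bar\Theta) \sim p_X \otimes p_Z \otimes p_\Theta$, so the whole task reduces to bounding this variance (equivalently, its second moment, which is cleaner to handle).

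The key simplification exploits the disjoint-supports assumption. For a token $v$, let $k(v)$ be the unique topic with $v \in \supp(P_{k(v)})$; then $p_{X\mid\Theta}(v\mid\theta) = \theta_{k(v)}P_{k(v)}(v)$, and since $\Theta$ is symmetric Dirichlet with $\EE[\Theta_k]=1/K$ we get $p_X(v) = P_{k(v)}(v)/K$, hence $p_{X\mid\Theta}(v\mid\theta)/p_X(v) = K\theta_{k(v)}$ (and the same for $Z$). Moreover, under $p_X$ the index $k(X)$ is uniform on $\{1,\dotsc,K\}$; together with the fact that $\bar\Theta$ is drawn independently of $(X,\tilde Z)$, this shows that the density ratio inside the variance equals $K^2\,\bar\Theta_J\,\bar\Theta_{J'}$ where $J := k(X)$ and $J' := k(\tilde Z)$ are i.i.d.\ uniform on $\{1,\dotsc,K\}$ and independent of $\bar\Theta$. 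So it suffices to bound $K^4\,\EE[\bar\Theta_J^2\bar\Theta_{J'}^2]$.

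The remaining step is a standard symmetric-Dirichlet moment computation. Averaging over $J,J'$ and using $\EE[\prod_j \Theta_j^{n_j}] = \prod_j (\alpha)_{n_j}/(K\alpha)_{\sum_j n_j}$ (rising factorials) yields
\[
  K^4\,\EE[\bar\Theta_J^2\bar\Theta_{J'}^2] \ = \ K^2 \cdot \frac{(\alpha+1)(\alpha+2)(\alpha+3) + (K-1)\alpha(\alpha+1)^2}{(K\alpha+1)(K\alpha+2)(K\alpha+3)} .
\]
In the non-sparse regime $\alpha=\Theta(1)$, the denominator is $\Theta((K\alpha)^3)$ and the numerator is $\Theta\!\big(K\alpha(\alpha+1)^2\big)$, so this is $\Theta\!\big((1+1/\alpha)^2\big) = O(1)$; in the sparse regime $\alpha \leq 1/K$ we have $K\alpha \leq 1$, so the denominator is $\Theta(1)$ and the numerator is $O(1)$, making the expression $O(K^2)$. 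Dividing by $m$ via Lemma~\ref{lemma:general-direct-embedding} gives the two claimed bounds. The only genuine content is the reduction of the density ratio to $K^2\bar\Theta_J\bar\Theta_{J'}$ using the disjoint supports; I do not anticipate a real obstacle beyond carefully verifying that $k(X)$ is uniform and independent of $\bar\Theta$ under $p_X \otimes p_Z \otimes p_\Theta$, after which everything is bookkeeping and a two-case asymptotic estimate.
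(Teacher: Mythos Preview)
Your proposal is correct and follows essentially the same route as the paper: apply Lemma~\ref{lemma:general-direct-embedding} with $H=\Theta$, bound the variance by the second moment, use the disjoint-support structure to reduce the density ratio to $K^2\bar\Theta_{k(X)}\bar\Theta_{k(\tilde Z)}$, and finish with Dirichlet fourth-moment formulas and a two-regime asymptotic estimate. Your closed-form expression for $K^4\,\EE[\bar\Theta_J^2\bar\Theta_{J'}^2]$ in terms of rising factorials is exactly what the paper obtains via Gamma functions (their Proposition~\ref{prop:tm}), just written more explicitly.
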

This is essentially the same bound as what was obtained in Proposition~\ref{prop:tm-landmark} for the landmark embedding.
We see that even though the hidden variable structure is not discrete, we still obtain bounds that are polynomial in the dimension of the hidden variable.

\begin{proposition}
  \label{prop:gaussian-prob}
  Assume that $Y$ takes values in $[-1,1]$.
  In the Gaussian model setting, if $\sigma^2 < 1/2$, then there exist $\eta \colon \Xcal \to \RR^m$ and $\psi \colon \Zcal \to \RR^m$ such that
  \begin{align*}
    \veps_{\direct}(\eta,\psi)
    & \ \leq \
    \frac1m \cdot \frac{(1+\sigma^2)^2}{\sqrt{1-4\sigma^4}} .
  \end{align*}
\end{proposition}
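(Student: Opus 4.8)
The plan is to derive this as a direct corollary of Lemma~\ref{lemma:general-direct-embedding}, taking the hidden variable to be $H \sim \Ncal(0,\sigma^2)$, which by assumption renders $X$ and $Z$ conditionally independent. That lemma immediately produces $\eta \colon \Xcal \to \RR^m$ and $\psi \colon \Zcal \to \RR^m$ with $\veps_{\direct}(\eta,\psi) \le V/m$, where
\[
  V \ := \ \var\!\left( \frac{p_{X\mid H}(X \mid \bar H)\, p_{Z \mid H}(\tilde Z \mid \bar H)}{p_X(X)\, p_Z(\tilde Z)} \right), \qquad (X,\tilde Z,\bar H) \sim p_X \otimes p_Z \otimes p_H .
\]
So the entire proposition reduces to showing $V \le (1+\sigma^2)^2/\sqrt{1-4\sigma^4}$ in the Gaussian model.

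First I would pass to the crude bound $V \le \EE[W^2]$, where $W$ denotes the ratio above, and exploit independence together with the symmetry of the two views: conditioned on $\bar H$, the two factors of $W$ are independent and identically distributed, so that
\[
  \EE[W^2] \ = \ \EE_{\bar H}\!\left[ q(\bar H)^2 \right], \qquad q(h) \ := \ \EE_X\!\left[ \left( \frac{p_{X\mid H}(X \mid h)}{p_X(X)} \right)^{\!2} \right].
\]
Then I would compute $q(h)$ explicitly. Substituting the $\Ncal(h,1)$ density for $p_{X\mid H}(\cdot\mid h)$, the $\Ncal(0,1+\sigma^2)$ density for $p_X$, and $X \sim \Ncal(0,1+\sigma^2)$, the integrand defining $q(h)$ is the exponential of a quadratic in $x$ whose leading coefficient is $-\tfrac{1+2\sigma^2}{2(1+\sigma^2)} < 0$, so the integral converges for every $\sigma^2 > 0$; completing the square should give $q(h) = \tfrac{1+\sigma^2}{\sqrt{1+2\sigma^2}}\exp\!\big(\tfrac{h^2}{1+2\sigma^2}\big)$.

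Next I would integrate $q(\bar H)^2$ over $\bar H \sim \Ncal(0,\sigma^2)$ using the Gaussian identity $\EE[e^{tG^2}] = (1-2t\tau^2)^{-1/2}$ for $G \sim \Ncal(0,\tau^2)$ valid when $t < \tfrac{1}{2\tau^2}$. Here $t = \tfrac{2}{1+2\sigma^2}$ and $\tau^2 = \sigma^2$, and the admissibility condition $t < \tfrac{1}{2\sigma^2}$ is equivalent to $\sigma^2 < \tfrac12$ — this is exactly where and why the hypothesis is needed, and it is the single place where the argument could break. Carrying out the algebra should give $\EE_{\bar H}[q(\bar H)^2] = \tfrac{(1+\sigma^2)^2}{1+2\sigma^2}\cdot\sqrt{\tfrac{1+2\sigma^2}{1-2\sigma^2}} = \tfrac{(1+\sigma^2)^2}{\sqrt{1-4\sigma^4}}$, hence $V \le \tfrac{(1+\sigma^2)^2}{\sqrt{1-4\sigma^4}}$, and the claimed bound on $\veps_{\direct}(\eta,\psi)$ follows from Lemma~\ref{lemma:general-direct-embedding}.

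The main obstacle is not conceptual: the work is entirely bookkeeping — executing the two one-dimensional Gaussian integrals cleanly and verifying that the convergence requirement for the second one collapses precisely to $\sigma^2 < 1/2$. The only structural point that needs care is the factorization $\EE[W^2] = \EE_{\bar H}[q(\bar H)^2]$, which rests on $X \independent \tilde Z \mid \bar H$ under the product measure $p_X \otimes p_Z \otimes p_H$ and on the two views sharing the same marginal and the same conditional given $H$. Replacing $\var(W)$ by $\EE[W^2]$ is deliberately lossy (one has $\EE[W] = 1$, so the exact variance is smaller by $1$), but it already produces the stated constant.
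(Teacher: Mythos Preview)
Your proposal is correct and follows the same overall strategy as the paper: invoke Lemma~\ref{lemma:general-direct-embedding}, bound the variance by the second moment, and compute that second moment by Gaussian integration, with the condition $\sigma^2 < 1/2$ emerging as the finiteness requirement for the relevant moment generating function.

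The only difference is the order of integration. The paper first integrates over $H$ for fixed $(x,z)$, obtaining a bivariate expression $\tfrac{(1+\sigma^2)^2}{\sqrt{1+4\sigma^2}}\exp(v^\top A v)$ with a $2\times2$ matrix $A$ that it then diagonalizes before integrating over $(X,\tilde Z)\sim p_X\otimes p_Z$. You instead condition on $\bar H$ first and exploit the factorization $\EE[W^2\mid\bar H=h]=q(h)^2$ coming from the independence and symmetry of the two views, reducing everything to two one-dimensional Gaussian integrals. Your route is a bit cleaner in that it sidesteps the eigen-decomposition of $A$; the paper's route is perhaps more mechanical and reuses the template from the proof of Proposition~\ref{prop:gaussian-landmark}. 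Both arrive at exactly $\tfrac{(1+\sigma^2)^2}{\sqrt{1-4\sigma^4}}$.
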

Here, we note that the existence argument for $\eta$ and $\psi$ requires a stronger condition than what was required for the landmark embedding to work.
This is reflected in the condition $\sigma^2 < 1/2$.

\section{Error analysis}
\label{section:error-analysis}

We now turn to the problem of bounding the error in the representation incurred by lack of data, imprecise optimization, or restricted function classes.
Specifically, we will be interested in the following measure of risk for an embedding function $\phi \colon \Xcal \rightarrow \RR^m$:
\begin{align}
\label{eqn:risk}
R(\phi)  \ &:= \ \inf_{w \in \RR^m} \EE \left[ \left( w^\top \phi(X) - \mu(X)  \right)^2 \right].
\end{align}
To obtain a guarantee on the mean squared error in approximating $\EE[ Y \mid X, Z]$, we can simply appeal to Lemma~\ref{lemma:mu_X-approximation}.

Our goal is to relate the risk of an embedding to the excess loss in terms of either $L_{\landmark}$ from \eqref{eqn:cross-entropy-contrastive} or $L_{\direct}$ from \eqref{eqn:cross-entropy-direct}.
Note that these two loss functions operate on different scales: the minimizer of $L_{\landmark}$ is the log-odds ratio function $\log \circ g^\star$, and the minimizer of $L_{\direct}$ is the odds ratio function $g^\star$.

We first state the error guarantee for the landmark embedding method.
The analysis is a modification of the argument from \citet{TKH20}.

\begin{theorem}
\label{thm:error-landmark-embedding}
  Assume $Y$ takes values in $[-1,1]$, and let $g_{\max} := \sup_{(x,z) \in \supp p_X \otimes p_Z} g^\star(x,z)$.
  Pick any $f \colon \Xcal \times \Xcal \rightarrow \RR$ such that $g := \exp \circ f$ satisfies $\sup_{(x,z) \in \supp p_X \otimes p_Z} g(x,z) \leq g_{\max}$.
  Let $\varphi \colon \Xcal \rightarrow \RR^m$ be the embedding function given by
  \[ \varphi(x) = \left( g(x,Z_1), \dotsc, g(x,Z_m) \right) \]
  where $Z_1, \dotsc, Z_m$ are i.i.d.\ copies of $Z$.
  With probability at least $1-\delta$ (over the realization of $Z_1,\dotsc,Z_m$),
  \[ R(\varphi) \ \leq \ 2 \veps_{\landmark} + 4 (1+g_{\max})^2 \sqrt{2 \veps_{\opt,\landmark} \veps_{\landmark}} + 16(1+g_{\max})^4 \veps_{\opt,\landmark} \]
where $\veps_{\landmark}$ is defined in Lemma~\ref{lemma:landmark-representation-error} and $\veps_{\opt,\landmark}$ is the excess $L_{\landmark}$-loss:
\[ \veps_{\opt,\landmark} \ := \ L_{\landmark}\left( f \right)  - L_{\landmark}\left( \log \circ g^\star \right). \]
\end{theorem}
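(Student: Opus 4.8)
The plan is to interpolate between $\varphi$ and the optimal embedding $\varphi^\star(x) = (g^\star(x,Z_1),\dotsc,g^\star(x,Z_m))$ built from the \emph{same} landmarks, and to combine the approximation guarantee for $\varphi^\star$ (Lemma~\ref{lemma:landmark-representation-error}) with a bound on how far $\varphi$ is from $\varphi^\star$ coordinatewise, which in turn is controlled by the excess logistic loss $\veps_{\opt,\landmark}$. Concretely, fix the good event of probability $1-\delta$ on which Lemma~\ref{lemma:landmark-representation-error} supplies a weight vector $w^\star\in\RR^m$ with $\EE[(w^{\star\top}\varphi^\star(X)-\mu(X))^2 \mid Z_{1:m}] \le \veps_{\landmark}$. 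Using that same $w^\star$ as the competitor for $\varphi$, the triangle inequality in $L^2$ gives
\[
  R(\varphi)^{1/2} \ \le \ \EE[(w^{\star\top}\varphi(X)-w^{\star\top}\varphi^\star(X))^2 \mid Z_{1:m}]^{1/2} + \veps_{\landmark}^{1/2},
\]
so the task reduces to bounding the first term, i.e.\ $\EE[(w^{\star\top}(\varphi(X)-\varphi^\star(X)))^2 \mid Z_{1:m}]$, and then squaring and using $(a+b)^2 \le 2a^2+2b^2$ or the AM/GM cross-term expansion to land on the stated form $2\veps_{\landmark} + 4(1+g_{\max})^2\sqrt{2\veps_{\opt,\landmark}\veps_{\landmark}} + 16(1+g_{\max})^4\veps_{\opt,\landmark}$.

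For the interpolation term, first observe that $w^\star$ has a benign norm: from its construction $w^\star_i = \frac1m \EE[Y_i\mid Z_i]$, so $\|w^\star\|_1 \le 1$ since $Y\in[-1,1]$; hence $|w^{\star\top}(\varphi(X)-\varphi^\star(X))| \le \max_i |g(X,Z_i)-g^\star(X,Z_i)|$ pointwise, but it is cleaner to keep the average form $|w^{\star\top}(\varphi(X)-\varphi^\star(X))| \le \frac1m\sum_i |g(X,Z_i)-g^\star(X,Z_i)|$ and apply Jensen so that taking the expectation over $X$ and the $Z_i$ yields
\[
  \EE\big[(w^{\star\top}(\varphi(X)-\varphi^\star(X)))^2\big] \ \le \ \EE_{(X,\tilde Z)\sim p_X\otimes p_Z}\big[(g(X,\tilde Z)-g^\star(X,\tilde Z))^2\big].
\]
So everything now hinges on translating the excess logistic loss $\veps_{\opt,\landmark} = L_{\landmark}(f) - L_{\landmark}(\log\circ g^\star)$ into an $L^2(p_X\otimes p_Z)$ bound on $g - g^\star = \exp\circ f - \exp\circ g^\star$. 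This is exactly the kind of loss-to-metric comparison done in \citet{TKH20}: since $\log\circ g^\star$ is the population minimizer of $L_{\landmark}$, the excess loss equals a Bregman-type divergence, namely $\EE[\mathrm{KL}]$ between the Bernoulli with the correct conditional probability $p_{X,Z}/(p_{X,Z}+p_X p_Z)$ and the one induced by $f$; a second-order Taylor expansion (or Pinsker plus a lower bound on the logistic curvature) shows $\veps_{\opt,\landmark} \gtrsim \EE_{(X_c,Z_c)\sim \Dcontrast}[(\sigma(f(X_c,Z_c)) - \sigma(\log g^\star(X_c,Z_c)))^2]$ up to a constant. Rewriting $\sigma(\log t) = t/(1+t)$ and using that both $g$ and $g^\star$ are bounded above by $g_{\max}$, the map $t \mapsto t/(1+t)$ is bi-Lipschitz on $[0,g_{\max}]$ with Lipschitz constant of its inverse $\le (1+g_{\max})^2$; finally one converts the expectation under $\Dcontrast$ (which puts mass $\tfrac12 p_{X,Z} + \tfrac12 p_X p_Z$ on the pair) into one under $p_X\otimes p_Z$ at the cost of another factor of $2$. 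Chaining these gives $\EE_{p_X\otimes p_Z}[(g-g^\star)^2] \le 4(1+g_{\max})^4\,\veps_{\opt,\landmark}$ (with the precise constant to be tracked), which plugged back into the triangle-inequality display produces the theorem.

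The main obstacle is this last comparison step — going from the excess logistic loss to a squared-$L^2$ distance between the odds-ratio functions with the \emph{right} dependence on $g_{\max}$. The logistic loss is flat for large $|f|$, so without the $g(x,z)\le g_{\max}$ cap there would be no such inequality at all; with the cap one must carefully lower-bound the curvature of $u \mapsto \log(1+e^{-u})$ on the relevant range of arguments $u = f(x,z)$ and track how the change of variables $u = \log g$ inflates constants, which is where the $(1+g_{\max})^2$ and $(1+g_{\max})^4$ factors come from. The remaining pieces — the norm bound on $w^\star$, the Jensen step, and the $\Dcontrast$-to-$p_X\otimes p_Z$ change of measure — are routine. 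I would also double-check that Lemma~\ref{lemma:landmark-representation-error} is being applied to $\varphi^\star$ (built from $g^\star$), not $\varphi$, so that the conditioning on $Z_{1:m}$ and the $1-\delta$ event are consistent throughout; this is purely bookkeeping but easy to get wrong.
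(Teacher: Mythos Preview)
Your high-level plan---interpolate between $\varphi$ and $\varphi^\star$, control the cross term via the norm of $w^\star$ and the $L^2$ distance $\EE[(g-g^\star)^2]$, and derive the latter from the excess logistic loss through the KL/Pinsker route and the bi-Lipschitz property of $t\mapsto t/(1+t)$ on $[0,g_{\max}]$---matches the paper exactly, and the constants you sketch are the right ones. There is, however, a genuine gap in how you pass from an \emph{unconditional} bound on the interpolation term to the \emph{conditional} bound the theorem actually requires.

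The theorem is a high-probability statement over the landmarks $Z_{1:m}$, and $R(\varphi)$ is the risk conditional on $Z_{1:m}$. You invoke the good event of Lemma~\ref{lemma:landmark-representation-error} for $\varphi^\star$, but then for the interpolation term you write ``taking the expectation over $X$ and the $Z_i$ yields $\EE_{p_X\otimes p_Z}[(g-g^\star)^2]$''. Averaging over the $Z_i$ is not legal here: you are on a \emph{fixed} realization of $Z_{1:m}$, and what you must bound is $\frac{1}{n}\sum_{i\in\text{block}}\EE_X[(g(X,Z_i)-g^\star(X,Z_i))^2]$ for \emph{those particular} $Z_i$. The good event of Lemma~\ref{lemma:landmark-representation-error} says nothing about this second quantity. (A related slip: the weight vector from that lemma is not $w^\star_i=\tfrac{1}{m}\EE[Y_i\mid Z_i]$ on all $m$ coordinates, but $\tfrac{1}{n}\EE[Y_i\mid Z_i]$ on a single block of size $n=\lfloor m/\log_2(1/\delta)\rfloor$ and zero elsewhere; your $\ell_1$ bound still holds, but the structure matters below.)

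The paper's fix is to rerun the block argument rather than cite Lemma~\ref{lemma:landmark-representation-error} as a black box. Within each block of $n$ coordinates, Markov gives (i) the representation-error bound $\le 2\veps_{\landmark}$ with probability $\ge 3/4$ (loosening the constant from $2$ to $4$ in the lemma's Markov step), and (ii) the bound $\frac{1}{n}\sum_i\EE_X[(g(X,Z_i)-g^\star(X,Z_i))^2]\le 4(1+g_{\max})^4\veps_{\opt,\landmark}$ with probability $\ge 3/4$---this is exactly your unconditional bound $\EE_{p_X\otimes p_Z}[(g-g^\star)^2]\le(1+g_{\max})^4\veps_{\opt,\landmark}$ followed by Markov. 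A union bound makes both hold simultaneously with probability $\ge 1/2$ per block, and independence across the $\log_2(1/\delta)$ blocks then yields at least one doubly-good block with probability $\ge 1-\delta$. Once you restrict $w^\star$ to that block, everything else in your sketch goes through verbatim.
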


We now turn to bounding the risk associated with direct embeddings.

\begin{theorem}
\label{thm:error-direct-embedding-realizable}
  Let $g_{\max} := \sup_{(x,z) \in \supp p_X \otimes p_Z} g^\star(x,z)$.
  Pick any embedding functions $\eta \colon \Xcal \to \RR^m$ and $\psi \colon \Zcal \rightarrow \RR^m$ such that $\sup_{(x,z) \in \supp p_X \otimes p_Z} \eta(x)^\top \psi(z) \leq g_{\max}$.
  We have
  \[ R(\eta) \ \leq \ \EE[Y^2] \, (1+g_{\max})^4 \veps_{\opt,\direct}  \]
  where $\veps_{\opt,\direct}$ is the excess $L_{\direct}$-loss:
  \[ \veps_{\opt,\direct} \ := \ L_{\direct}((x,z) \mapsto \eta(x)^\top \psi(z)) - L_{\direct}(g^\star). \]
\end{theorem}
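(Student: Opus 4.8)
The plan is to chain two reductions. First, since $R(\eta)=\inf_{w\in\RR^m}\EE[(w^\top\eta(X)-\mu(X))^2]$, Lemma~\ref{lemma:direct-embedding-approximation} applied to the given pair $\eta,\psi$ immediately yields $R(\eta)\le\EE[Y^2]\cdot\veps_{\direct}(\eta,\psi)$, with $\veps_{\direct}(\eta,\psi)$ as defined there. So the whole theorem reduces to a ``fast rate'' bound converting squared density-ratio error into excess logistic loss:
\[ \veps_{\direct}(\eta,\psi)\ \le\ (1+g_{\max})^4\,\veps_{\opt,\direct}. \]

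For this, I would first rewrite $L_{\direct}$ as an expectation under the product measure. Splitting the contrastive expectation on the fair coin and using $p_{X,Z}=g^\star\cdot(p_X\otimes p_Z)$ as a change of measure on the ``real'' branch,
\[ L_{\direct}(f)\ =\ \tfrac12\,\EE_{p_X\otimes p_Z}\bigl[\ell\bigl(f(X,\tilde Z);g^\star(X,\tilde Z)\bigr)\bigr],\qquad \ell(t;g_0):=g_0\log(1+1/t)+\log(1+t). \]
(If $\eta(\cdot)^\top\psi(\cdot)\le 0$ on a set of positive $p_X\otimes p_Z$-measure, then $L_{\direct}$ is $+\infty$ and there is nothing to prove; so assume $f:=\eta(\cdot)^\top\psi(\cdot)>0$ on $\supp(p_X\otimes p_Z)$.) A one-line calculation gives $\ell'(t;g_0)=(t-g_0)/(t(1+t))$, so $t\mapsto\ell(t;g_0)$ is minimized at $t=g_0$, recovering that $g^\star$ minimizes $L_{\direct}$. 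The crux is a pointwise lower bound on the excess $\ell(t;g_0)-\ell(g_0;g_0)$ by a multiple of $(t-g_0)^2$. Passing to the logistic coordinates $a=1/(1+g_0)$, $b=1/(1+t)$, one verifies the exact identity $\ell(t;g_0)-\ell(g_0;g_0)=(1+g_0)\,\KL(\mathrm{Ber}(a)\,\|\,\mathrm{Ber}(b))$ --- i.e.\ the pointwise excess logistic loss is just a rescaled Bernoulli KL divergence. Pinsker gives $\KL(\mathrm{Ber}(a)\,\|\,\mathrm{Ber}(b))\ge 2(a-b)^2$, and since $a-b=(t-g_0)/((1+g_0)(1+t))$ while $g_0=g^\star\le g_{\max}$ and $t=\eta^\top\psi\le g_{\max}$ on the relevant support, this yields
\[ \ell(t;g_0)-\ell(g_0;g_0)\ \ge\ \frac{2(t-g_0)^2}{(1+g_0)(1+t)^2}\ \ge\ \frac{2(t-g_0)^2}{(1+g_{\max})^3}. \]
Taking expectations over $p_X\otimes p_Z$ and recalling the factor $\tfrac12$ in $L_{\direct}$ gives $\veps_{\opt,\direct}\ge(1+g_{\max})^{-3}\veps_{\direct}(\eta,\psi)$, hence $\veps_{\direct}(\eta,\psi)\le(1+g_{\max})^3\veps_{\opt,\direct}\le(1+g_{\max})^4\veps_{\opt,\direct}$; combining with the first reduction proves the theorem (with, in fact, a slightly better constant).

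The step I expect to be the main obstacle is the pointwise lower bound on the excess loss. The obvious route --- a second-order Taylor expansion of $t\mapsto\ell(t;g_0)$, i.e.\ uniform strong convexity --- fails: this function is not uniformly strongly convex on $(0,\infty)$, and its second derivative $\ell''(t)=g_0/t^2-(1+g_0)/(1+t)^2$ even becomes negative once $t>g_0+\sqrt{g_0^2+g_0}$, which is permitted here since $t$ may be as large as $g_{\max}$ while $g_0$ is small. The fix is to avoid estimating $(t-g_0)^2$ directly in the original coordinate: rewrite the excess exactly as a Bernoulli KL divergence in the logistic parametrization, apply Pinsker there, and only then change variables back, absorbing the Jacobian of $g\mapsto 1/(1+g)$ into the uniform bound $g_{\max}$ on both $g^\star$ and $\eta^\top\psi$.
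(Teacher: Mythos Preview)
Your proof is correct and follows essentially the same route as the paper: reduce to Lemma~\ref{lemma:direct-embedding-approximation}, recognize the pointwise excess loss as a Bernoulli KL divergence in the logistic parametrization $p=g/(1+g)$, apply Pinsker, and convert $(p^\star-p)^2$ back to $(g^\star-\eta^\top\psi)^2$ using the uniform bound $g_{\max}$.

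The one difference is organizational but buys you something. The paper keeps the expectation under the contrastive mixture $\tfrac12 p_{X,Z}+\tfrac12 p_X\otimes p_Z$, derives $\KL\ge 2(p^\star-p)^2\ge 2(g^\star-\eta^\top\psi)^2/(1+g_{\max})^4$, and then discards the $p_{X,Z}$ half to land on the product-measure quantity needed for Lemma~\ref{lemma:direct-embedding-approximation}. You instead push the change of measure through at the outset, which produces the exact identity $\ell(t;g_0)-\ell(g_0;g_0)=(1+g_0)\,\KL(\mathrm{Ber}(a)\|\mathrm{Ber}(b))$; that extra $(1+g_0)$ factor cancels one power of $(1+g_0)$ in the denominator of $(a-b)^2$, yielding $(1+g_{\max})^3$ rather than $(1+g_{\max})^4$. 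So your version is a mild sharpening of the paper's bound, obtained by the same mechanism. Your aside about why a naive strong-convexity/Taylor argument fails is also correct and a nice diagnostic.
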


We point out that $\veps_{\opt,\direct}$ is the excess loss relative to the odds ratio $g^\star$, and not necessarily the best $m$-dimensional representation.
Thus, when there are no perfect $m$-dimensional embedding functions $\eta^\star, \psi^\star$ satisfying $g^\star(x,z) = \eta^\star(x)^\top \psi^\star(z)$, the quantity $\veps_{\opt, \direct}$ accounts for both the error due to optimization and the error due to representational non-realizability.
Moreover, as discussed after the statement of Lemma~\ref{lemma:general-direct-embedding}, we do not actually have a handle on the relationship between embedding dimension and the extent to which bivariate architectures can provide approximations of \eqref{eqn:cross-entropy-direct}.
It therefore remains an interesting future direction to provide a more complete picture of the general setting in which the error due to optimization is teased apart from the representational issues arising from the use of finite dimensional embedding functions.
\subsubsection*{Acknowledgements}
We acknowledge support from NSF grants CCF-1740833, DMREF-1534910, IIS-1563785; a Bloomberg Data Science Research Grant; a JP Morgan Faculty Award; and a Sloan Research Fellowship. We also thank the reviewers whose helpful comments led to improvements in this paper.

\bibliography{refs}

\newpage
\appendix

\section{Omitted proofs}
\label{section:proofs}

\subsection{Proof of Lemma~\ref{lemma:mu_X-approximation}}
\label{section:proof-mu_X-approximation}

By the law of total expectation and Jensen's inequality, we have
\begin{align*}
  \EE[ (\mu(X) - \EE[Y \mid X])^2 ]
  & = \EE[ (\EE[\EE[Y \mid Z] \mid X] - \EE[Y \mid X])^2 ] \\
  & = \EE[ (\EE[\EE[Y \mid Z] - \EE[Y \mid X,Z] \mid X])^2 ] \\
  & \leq \EE[ \EE[(\EE[Y \mid Z] - \EE[Y \mid X,Z])^2 \mid X] ] \quad \text{(Jensen's inequality)} \\
  & = \EE[(\EE[Y \mid Z] - \EE[Y \mid X,Z])^2] \\
  & = \veps_Z .
\end{align*}
Using the AM/GM inequality, for any $\lambda>0$,
\begin{align*}
  \lefteqn{ \EE[ (\mu(X) - \EE[Y \mid X,Z])^2 ] } \\
  & = \EE[ (\EE[Y \mid X] - \EE[Y \mid X,Z] + \mu(X) - \EE[Y \mid X])^2 ] \\
  & \leq (1+1/\lambda) \EE[ (\EE[Y \mid X] - \EE[Y \mid X,Z])^2 ] + (1+\lambda) \EE[ (\mu(X) - \EE[Y \mid X])^2 ] \\
  & \leq (1+1/\lambda)\veps_X + (1+\lambda)\veps_Z .
\end{align*}
Optimizing the bound with respect to $\lambda$ gives
\begin{align*}
  \EE[ (\mu(X) - \EE[Y \mid X,Z])^2 ]
  & \leq \veps_X + 2\sqrt{\veps_X\veps_Z} + \veps_Z .   \tag*{\qed}
\end{align*}

\subsection{Proof of Lemma~\ref{lemma:landmark-representation-error}}
\label{section:proof-landmark-representation-error}

  We partition the $m$ coordinates of the embedding into blocks of $n := \lfloor m / \log_2(1/\delta) \rfloor$ coordinates per block.
  We first consider the part of the embedding corresponding to the first block, say, $\varphi_{1:n}^\star \colon \Xcal \to \RR^n$.
  Define the weight vector $v \in \RR^{n}$ by
\begin{align*}
  v \ = \ v(Z_1,\dotsc,Z_n) & \ := \ \frac{1}{n} (\EE[Y_1 \mid Z_1],\dotsc,\EE[Y_n \mid Z_{n}]) .
\end{align*}
Define $A_i(x) = \EE[Y_i \mid Z_i] g^\star(x,Z_i) - \mu(x)$ for all $x \in \Xcal$, so $A_1(x),\dotsc,A_n(x)$ are i.i.d.~mean-zero random variables.
This implies
\begin{align*}
  \EE[(v^\top \varphi_{1:n}^\star(x) -  \mu(x))^2]
  \ & = \ \EE\left[ \left( \frac1n \sum_{i=1}^n A_i(x) \right)^2 \right]
  \ = \ \frac{\EE[A_1(x)^2]}{n} .
\end{align*}
Now replacing $x$ with $X$ and taking expectations gives
\begin{align*}
  \EE[(v^\top \varphi_{1:n}^\star(X) -  \mu(X))^2]
  \ & = \ \frac{\EE[A_1(X)^2]}{n}
  \ = \ \frac{\var(\EE[Y_1 \mid Z_1]g^\star(X,Z_1))}{n} .
\end{align*}
By Markov's inequality, the event
\[ \EE[(v^\top \varphi_{1:n}^\star(X) -  \mu(X))^2 \mid Z_1,\dotsc,Z_m] \ \leq \ \frac{2}{n} \var(\EE[Y_1|Z_1]g^\star(X,Z_1)). \]
has probability at least $1/2$.
We can analogously define such a ``good'' event for each block of coordinates.
With probability at least $1-\delta$, at least one of these good events occurs; in this event, we can pick any such ``good'' block, set the corresponding weights in $w$ according to the construction above, and set the remaining weights in $w$ to zero.
This produces the desired guarantee.
\qed

\subsection{Proof of Proposition~\ref{prop:tm-landmark}}
\label{section:proof-tm-landmark}

We first make a simple observation about the relationship between terms in the definitions of $\veps_{\landmark}$ and $\veps_{\direct}$.
\begin{proposition}
  \label{prop:jensen}
  If $X \independent Z \mid H$, then for any $(x,z) \in \Xcal \times \Zcal$,
  \begin{align*}
    g^\star(x,z)^2
    & \ \leq \
    \EE\left[ \left( \frac{p_{X \mid H}(x \mid H) p_{Z \mid H}(z \mid H)}{p_X(x) p_Z(z)} \right)^2 \right ]
  \end{align*}
  where $H \sim p_H$.
\end{proposition}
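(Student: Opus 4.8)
The plan is to rewrite $g^\star(x,z)$ as an expectation over $H$ and then apply Jensen's inequality to the convex function $t \mapsto t^2$. The only input needed is the conditional independence $X \independent Z \mid H$, which lets us factor the conditional joint density.

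First I would recall that, by definition, $g^\star(x,z) = p_{X,Z}(x,z) / (p_X(x) p_Z(z))$, and that marginalizing over $H$ together with conditional independence gives
\[
  p_{X,Z}(x,z) \ = \ \int p_{X \mid H}(x \mid h)\, p_{Z \mid H}(z \mid h)\, p_H(h) \dif h .
\]
Dividing through by $p_X(x) p_Z(z)$ (which does not depend on $h$) shows
\[
  g^\star(x,z) \ = \ \EE\left[ \frac{p_{X \mid H}(x \mid H)\, p_{Z \mid H}(z \mid H)}{p_X(x)\, p_Z(z)} \right] ,
\]
where the expectation is over $H \sim p_H$ and $(x,z)$ is fixed.

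Finally I would square both sides and invoke Jensen's inequality: since $t \mapsto t^2$ is convex, $(\EE[W])^2 \le \EE[W^2]$ for the (nonnegative) random variable $W = p_{X\mid H}(x\mid H) p_{Z\mid H}(z\mid H)/(p_X(x) p_Z(z))$, which is exactly the claimed bound. There is essentially no obstacle here; the proposition is a one-line consequence of the factorization and Jensen, and its role is presumably just to control the $g^\star(X,\tilde Z)^2$ terms appearing in $\veps_{\landmark}$ (Lemma~\ref{lemma:landmark-representation-error}) by the variance-type quantity that governs $\veps_{\direct}$ (Lemma~\ref{lemma:general-direct-embedding}), so that the topic-model bound of Proposition~\ref{prop:tm-landmark} can be deduced from the corresponding direct-embedding computation. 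The only mild care required is to make sure the densities are interpreted correctly (mass functions vs.\ density functions, as set up in Section~\ref{section:data}) so that the marginalization identity for $p_{X,Z}$ is valid.
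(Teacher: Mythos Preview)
Your proposal is correct and matches the paper's proof essentially line for line: express $p_{X,Z}(x,z)$ as $\int p_{X\mid H}(x\mid h)\,p_{Z\mid H}(z\mid h)\,p_H(h)\,\dif h$ via conditional independence, recognize $g^\star(x,z)$ as the corresponding expectation over $H\sim p_H$, and apply Jensen's inequality to $t\mapsto t^2$. Your remark about the proposition's role---reducing the bound on $\EE[g^\star(X,\tilde Z)^2]$ in $\veps_{\landmark}$ to the second-moment quantity computed for the direct embedding (Proposition~\ref{prop:tm})---is also exactly how the paper uses it.
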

\begin{proof}
  Fix any $(x,z) \in \Xcal \times \Zcal$.
  Then
  \begin{align*}
    g^\star(x,z)^2
    & \ = \ \left( \frac{p_{X,Z}(x,z)}{p_X(x)p_Z(z)} \right)^2 \\
    & \ = \ \frac1{(p_X(x)p_Z(z))^2} \left( \int p_{X,Z,H}(x,z,h) \dif h \right)^2 \\
    & \ = \ \frac1{(p_X(x)p_Z(z))^2} \left( \int p_{X\mid H}(x \mid h) p_{Z\mid H}(z \mid h) p_H(h) \dif h \right)^2 \\
    & \ \leq \ \frac1{(p_X(x)p_Z(z))^2} \int \left( p_{X\mid H}(x \mid h) p_{Z\mid H}(z \mid h) \right)^2 p_H(h) \dif h \\
    & \ = \ \EE\left[ \left( \frac{p_{X\mid H}(x \mid H) p_{Z\mid H}(z \mid H)}{p_X(x)p_Z(z)} \right)^2 \right] ,
  \end{align*}
  where the inequality follows from Jensen's inequality.
\end{proof}

Now we return to the proof of Proposition~\ref{prop:tm-landmark}.
Since $Y$ takes values in $[-1,1]$, it suffices to bound $\EE[g^\star(X,Z_1)^2]$.
By Proposition~\ref{prop:jensen}, for any $(x,z_1)$ we have
\[
  g^\star(x,z_1) \ \leq \ 
  \EE \left[ \left( \frac{p_{X \mid \Theta}(x \mid \bar\Theta) p_{Z \mid \Theta}(z_1 \mid \bar\Theta) }{p_X(x) p_Z(z_1)} \right)^2 \right].
\]
Now replacing $(x,z_1)$ with $(X,Z_1)$ and taking expectations on both sides, the result follows by Proposition~\ref{prop:tm}.
\qed

We note that a direct analysis of $\EE[g^\star(X,Z_1)^2]$ is also straightforward, and would ultimately involve only second-moments of $\bar\Theta$ (as opposed to fourth-moments, considered in Proposition~\ref{prop:tm}).
However, the final bound is the same.

\subsection{Proof of Proposition~\ref{prop:gaussian-landmark}}
\label{section:proof-gaussian-landmark}

The marginal distribution of $(X,Z)$ is $\Ncal(0,\Sigma)$, where
\[ \Sigma \ :=  \ \begin{pmatrix}
1+\sigma^2 & \sigma^2 \\
\sigma^2 & 1+\sigma^2
\end{pmatrix}.  \]
Thus, letting $v := (x,z)$, we have
\begin{align*}
   p_{X,Z}(v) \ &= \ \frac{1}{2\pi \sqrt{1+2\sigma^2}} \exp \left( -\frac{1}{2} v^\top \Sigma^{-1}v  \right), \\
p_X(\cdot) = p_Z(\cdot) \ &= \ \frac{1}{\sqrt{2\pi(1+\sigma^2)}} \exp\left(-\frac{(\cdot)^2}{2(1+\sigma^2)}\right)
\end{align*}
Therefore, for any fixed $(x,z) \in \RR^2$,
\begin{align*}
  g^\star(x,z)^2
  = \frac{p_{X,Z}(x,z)^2}{p_X(x)^2p_Z(z)^2}
  = \frac{(1+\sigma^2)^2}{1+2\sigma^2} \exp\left( - v^\top \Sigma^{-1} v + \frac{x^2+z^2}{1+\sigma^2} \right) 
  = \frac{(1+\sigma^2)^2}{1+2\sigma^2} \exp\left( v^\top A v \right)
\end{align*}
where
\[
  A \ := \
  -\frac{\sigma^2}{1+2\sigma^2}
  \begin{pmatrix}
    \frac{\sigma^2}{1+\sigma^2} & -1 \\
    -1 & \frac{\sigma^2}{1+\sigma^2}
  \end{pmatrix}
  \ = \ \lambda_1 u_1u_1^\top + \lambda_2 u_2u_2^\top
\]
has eigenvalues $\lambda_1 = \frac{\sigma^2}{(1+\sigma^2)(1+2\sigma^2)}$ and $\lambda_2 = -\frac{\sigma^2}{1+\sigma^2}$ corresponding to some orthonormal eigenvectors $u_1$ and $u_2$.
Replacing $(x,z)$ with $V := (X,\tilde Z) \sim p_X \otimes p_Z$ and taking expectation gives
\begin{align*}
  \EE[ g^\star(X,\tilde Z)^2 ]
  & \ = \ \frac{(1+\sigma^2)^2}{1+2\sigma^2} \EE\left[ \exp\left( \lambda_1 (u_1^\top V)^2 + \lambda_2 (u_2^\top V)^2 \right) \right] \\
  & \ = \ \frac{(1+\sigma^2)^2}{1+2\sigma^2} \EE\left[ \exp\left( \lambda_1 (u_1^\top V)^2 \right) \right] \EE\left[ \exp\left( \lambda_2 (u_2^\top V)^2 \right) \right] \\
  & \ = \ \frac{(1+\sigma^2)^2}{1+2\sigma^2} \cdot \frac1{\sqrt{1-2\lambda_1(1+\sigma^2)}} \cdot \frac1{\sqrt{1-2\lambda_2(1+\sigma^2)}} \\
  & \ = \ \frac{(1+\sigma^2)^2}{1+2\sigma^2} \cdot \frac1{\sqrt{1-\frac{2\sigma^2}{1+2\sigma^2}}} \cdot \frac1{\sqrt{1+2\sigma^2}} \\
  & \ = \ \frac{(1+\sigma^2)^2}{1+2\sigma^2} .
\end{align*}
Above, we use the fact that $u_1^\top V$ and $u_2^\top V$ are
independent $\Ncal(0,1+\sigma^2)$ random variables so that the
expectations in the second line are the chi-squared moment generating
function. 
 \qed

\subsection{Proof of Lemma~\ref{lemma:direct-embedding-approximation}}
\label{section:proof-direct-embedding-approximation}

We take $w$ to be
\[ w \ := \ \EE\left[ \EE[Y \mid Z] \psi(Z) \right] . \]
  Let $(\tilde X, \tilde Z, \tilde Y)$ be an independent copy of $(X,Z,Y)$, and observe that
\begin{align*}
  \mu(X)
  & \ = \ \EE\left[ \EE[ Y \mid Z ] \mid X \right]
  \ = \ \EE\left[ \EE[ \tilde Y \mid \tilde Z ] g^\star(X,\tilde Z) \mid X \right] .
\end{align*}
Therefore
\begin{align*}
  \EE[ (w^\top \eta(X) - \mu(X))^2 ]
  & \ = \ \EE\left[ \left( \EE\left[ \EE[ \tilde Y \mid \tilde Z ] \psi(\tilde Z) \right]^\top \eta(X) - \mu(X) \right)^2 \right] \\
  & \ = \ \EE\left[ \left( \EE\left[ \EE[ \tilde Y \mid \tilde Z ] \psi(\tilde Z)^\top \eta(X) - \EE[\tilde Y \mid \tilde Z] g^\star(X,\tilde Z) \mid X \right] \right)^2 \right] \\
  & \ \leq \ \EE\left[ \EE\left[ \EE[ \tilde Y \mid \tilde Z ]^2 \mid X \right] \cdot \EE\left[ \left( \psi(\tilde Z)^\top \eta(X) - g^\star(X,\tilde Z) \right)^2 \mid X \right] \right] \\
  & \ = \ \EE\left[ \EE\left[ \EE[ \tilde Y \mid \tilde Z ]^2 \right] \cdot \EE\left[ \left( \psi(\tilde Z)^\top \eta(X) - g^\star(X,\tilde Z) \right)^2 \mid X \right] \right] \\
  & \ \leq \ \EE\left[ \EE[ \tilde Y^2 ] \cdot \EE\left[ \left( \psi(\tilde Z)^\top \eta(X) - g^\star(X,\tilde Z) \right)^2 \mid X \right] \right] \\
  & \ = \ \EE[ Y^2 ] \cdot \EE\left[ \left( \psi(\tilde Z)^\top \eta(X) - g^\star(X,\tilde Z) \right)^2 \right] .
\end{align*}
Above, the first inequality follows from Cauchy-Schwarz, the subsequent equality uses the independence of $(\tilde Z, \tilde Y)$ and $X$, and the second the inequality follows from Jensen's inequality and the law of total expectation. \qed

\subsection{Proof of Lemma~\ref{lemma:general-direct-embedding}}
\label{section:proof-general-direct-embedding}

We will prove this using the probabilistic method, constructing a random embedding of dimension $m$ that satisfies the lemma in expectation.
This will suffice to show that there exists such an embedding. 

Let $H_1,\dotsc,H_m$ be i.i.d.~copies of $H$.
Define, for each $(x,z) \in \Xcal \times \Zcal$,
\begin{align*}
  \eta(x) \ &= \ \frac{1}{p_X(x)\sqrt{m}} (p_{X\mid H}(x \mid H_i))_{i=1}^m \\
  \psi(z) \ &= \ \frac{1}{p_Z(z)\sqrt{m}} (p_{Z\mid H}(z \mid H_i))_{i=1}^m \\
  B_i(x,z) \ &= \ \frac{p_{X\mid H}(x \mid H_i)p_{Z\mid H}(z \mid H_i)}{p_X(x)p_Z(z)} - g^\star(x,z) .
\end{align*}
Observe that $B_1(x,z),\dotsc,B_m(x,z)$ are i.i.d.~mean-zero random variables, and
\[ \eta(x)^\top \psi(z) - g^\star(x,z) \ = \ \frac1m \sum_{i=1}^m B_i(x,z) , \]
and
\[ \EE[ (\eta(x)^\top \psi(z) - g^\star(x,z))^2 ] \ = \ \frac{\EE[B_1(x,z)^2]}{m} . \]
Now replacing $(x,z)$ with $(X,\tilde Z) \sim p_X \otimes p_Z$ and taking expectations gives
\begin{align*}
  \EE\left[ (\eta(X)^\top \psi(\tilde Z) - g^\star(X,\tilde Z))^2 \right]
  & \ = \ \frac{\EE[B_1(X,\tilde Z)^2]}{m} \ = \ \frac1m \var\left( \frac{p_{X\mid H}(X \mid \bar H) p_{Z \mid H}(\tilde Z \mid \bar H)}{p_X(X) p_Z(\tilde Z)} \right) .  \tag*{\qed}
\end{align*}

\subsection{Proof of Proposition~\ref{prop:tm-prob}}
\label{section:proof-tm-prob}

By Lemma~\ref{lemma:general-direct-embedding}, it suffices to bound
\[
  \var\left( \frac{p_{X \mid \Theta}(X \mid \bar\Theta) p_{Z \mid \Theta}(\tilde Z \mid \bar\Theta) }{p_X(X) p_Z(\tilde Z)} \right) ,
\]
for $(X,\tilde Z,\bar\Theta) \sim p_X \otimes p_Z \otimes p_\Theta$.
This, in turn, is bounded above by
\[
  \EE\left[ \left( \frac{p_{X \mid \Theta}(X \mid \bar\Theta) p_{Z \mid \Theta}(\tilde Z \mid \bar\Theta) }{p_X(X) p_Z(\tilde Z)} \right)^2 \right] .
\]
So Proposition~\ref{prop:tm-prob} follows immediately from the following result.

\begin{proposition}
  \label{prop:tm}
  In the topic modeling setting,
  \[
    \EE\left[ \left( \frac{p_{X \mid \Theta}(X \mid \bar\Theta) p_{Z \mid \Theta}(\tilde Z \mid \bar\Theta) }{p_X(X) p_Z(\tilde Z)} \right)^2 \right]
    \ = \
    \begin{cases}
      \Theta(1)
      & \text{if $\alpha = \Theta(1)$ as $K\to\infty$} ; \\
      \Theta(K^2)
      & \text{if $\alpha \leq 1/K$} .
    \end{cases}
  \]
\end{proposition}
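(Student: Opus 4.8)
The plan is to compute the expectation directly by exploiting the disjoint-support assumption on the topic distributions $P_1,\dotsc,P_K$, which forces any token to ``reveal'' its topic. Concretely, write $p_{X\mid\Theta}(x\mid\theta) = \sum_k \theta_k P_k(x)$, and note that because the supports are disjoint, for a token $x$ in the support of $P_{k(x)}$ we have $p_{X\mid\Theta}(x\mid\theta) = \theta_{k(x)} P_{k(x)}(x)$, and $p_X(x) = \EE[\Theta_{k(x)}] P_{k(x)}(x) = \frac1K P_{k(x)}(x)$ by the symmetry of the Dirichlet (each $\EE[\Theta_k] = 1/K$). Hence the ratio collapses:
\[
  \frac{p_{X\mid\Theta}(X\mid\bar\Theta)}{p_X(X)} \ = \ K\,\bar\Theta_{k(X)} ,
  \qquad
  \frac{p_{Z\mid\Theta}(\tilde Z\mid\bar\Theta)}{p_Z(\tilde Z)} \ = \ K\,\bar\Theta_{k(\tilde Z)} ,
\]
so the quantity inside the expectation is $K^4\,\bar\Theta_{k(X)}^2\,\bar\Theta_{k(\tilde Z)}^2$.

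Next I would integrate out. Conditioned on the realized topics $j := k(X)$ and $j' := k(\tilde Z)$, the random variables $X,\tilde Z,\bar\Theta$ are independent, so the expectation of $\bar\Theta_j^2\bar\Theta_{j'}^2$ depends only on whether $j=j'$. Under $p_X\otimes p_Z$, the topic of $X$ is distributed as $\Pr(k(X)=j) = \EE[\Theta_j] = 1/K$ and likewise independently for $\tilde Z$, so $\Pr(j=j') = 1/K$ and $\Pr(j\neq j') = 1-1/K$. Therefore
\[
  \EE\left[ \left( \tfrac{p_{X\mid\Theta}(X\mid\bar\Theta)p_{Z\mid\Theta}(\tilde Z\mid\bar\Theta)}{p_X(X)p_Z(\tilde Z)} \right)^2 \right]
  \ = \ K^4\left( \tfrac1K\,\EE[\bar\Theta_1^4] + \left(1-\tfrac1K\right)\EE[\bar\Theta_1^2\bar\Theta_2^2] \right) .
\]
Now it remains to plug in the standard Dirichlet moment formulas: for $\mathrm{Dirichlet}(\alpha,\dotsc,\alpha)$ with $K$ components (total concentration $\alpha_0 = K\alpha$), $\EE[\Theta_1^2] = \frac{\alpha(\alpha+1)}{\alpha_0(\alpha_0+1)}$, $\EE[\Theta_1^4] = \frac{\alpha(\alpha+1)(\alpha+2)(\alpha+3)}{\alpha_0(\alpha_0+1)(\alpha_0+2)(\alpha_0+3)}$, and $\EE[\Theta_1^2\Theta_2^2] = \frac{\alpha^2(\alpha+1)^2}{\alpha_0(\alpha_0+1)(\alpha_0+2)(\alpha_0+3)}$.

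Finally I would do the asymptotic bookkeeping in the two regimes. When $\alpha = \Theta(1)$ and $K\to\infty$, we have $\alpha_0 = \Theta(K)$, so $\EE[\bar\Theta_1^4] = \Theta(1/K^4)$ and $\EE[\bar\Theta_1^2\bar\Theta_2^2] = \Theta(1/K^4)$; multiplying by $K^4$ and combining with the $\frac1K$ vs.\ $1-\frac1K$ weights gives $\Theta(1)$. When $\alpha \leq 1/K$, we have $\alpha_0 = K\alpha \leq 1$, so the denominators $\alpha_0(\alpha_0+1)(\alpha_0+2)(\alpha_0+3) = \Theta(\alpha_0)$ and $\alpha_0(\alpha_0+1) = \Theta(\alpha_0)$; then $K^4\cdot\frac1K\cdot\EE[\bar\Theta_1^4] = K^3\cdot\frac{\alpha(\alpha+1)(\alpha+2)(\alpha+3)}{\alpha_0(\alpha_0+1)(\alpha_0+2)(\alpha_0+3)} = \Theta(K^3\cdot\frac{\alpha}{\alpha_0}) = \Theta(K^3/K) = \Theta(K^2)$, and one checks the $j\neq j'$ term is of the same or smaller order, for a total of $\Theta(K^2)$. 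The main obstacle is not conceptual but arithmetic care: getting the exact powers of $\alpha_0$ right in both the numerator and denominator across the two regimes, and verifying that both the $j=j'$ and $j\neq j'$ contributions land at the claimed order (in particular that neither dominates in a way that breaks the stated bound); the disjoint-support collapse of the density ratio to $K\bar\Theta_{k(X)}$ is the only real idea and it is straightforward.
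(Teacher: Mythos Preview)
Your proposal is correct and follows essentially the same route as the paper: collapse the density ratio to $K\bar\Theta_{k(X)}$ via the disjoint-support assumption, reduce the expectation to $K^3\bigl(\EE[\bar\Theta_1^4] + (K-1)\EE[\bar\Theta_1^2\bar\Theta_2^2]\bigr)$ (your $K^4(\tfrac1K\cdot + (1-\tfrac1K)\cdot)$ is the same expression), and finish with the standard Dirichlet moment formulas in the two regimes. The only cosmetic difference is that the paper writes the moments in Gamma-function form rather than the expanded polynomial form you use.
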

\begin{proof}
  For any word $v \in \Vcal$, let $k(v) \in [K]$ denote the unique topic for which $P_k(v) > 0$.
  For any $v \in \Vcal$ and $\theta \in \Delta^{K-1}$, we have
  \begin{align*}
    p_X(v) \ &= \ \frac{1}{K} P_{k(v)}(v) , \\
    p_{X \mid \Theta}(v \mid \theta) \ &= \ \theta_{k(v)} P_{k(v)}(v) .
  \end{align*}
  Therefore, we have for any $\theta \in \Delta^{K-1}$, $x \in \Vcal$, and $z \in \Vcal$,
  \begin{align*}
    \left( \frac{p_{X \mid \Theta}(x \mid \theta) p_{Z \mid \Theta}(z \mid \theta) }{p_X(x) p_Z(z)} \right)^2 \ &= \ K^4 \cdot \theta_{k(x)}^2 \theta_{k(z)}^2.
  \end{align*}
  Replacing $(x,z,\theta)$ with $(X,\tilde Z,\bar\Theta)$ and taking expecatations gives
  \begin{align*}
    \EE\left[ \left( \frac{p_{X \mid \Theta}(X \mid \bar\Theta) p_{Z \mid \Theta}(\tilde Z \mid \bar\Theta) }{p_X(X) p_Z(\tilde Z)} \right)^2 \right]
    \ &= \ \EE\left[ K^4 \cdot \bar\Theta_{k(X)}^2 \bar\Theta_{k(\tilde Z)}^2 \right]  \\
    \ &= \ K^4 \cdot \sum_{k=1}^K \sum_{k'=1}^K \Pr(k(X) = k) \Pr(k(\tilde Z) = k') \EE\left[ \bar\Theta_{k}^2 \bar\Theta_{k'}^2 \right]  \\
    \ &= \ K^2 \cdot \sum_{k=1}^K \sum_{k'=1}^K \EE\left[ \bar\Theta_{k}^2 \bar\Theta_{k'}^2 \right]  \\
    \ &= \ K^3 \cdot \left( \EE[ \bar\Theta_1^4 ] + (K-1) \EE[ \bar\Theta_1^2\bar\Theta_2^2 ]
    \right)
  \end{align*}
  where the fourth and fifth steps follow by symmetry.
  The fourth-moments in the final expression are:
  \begin{align*}
  \EE[\bar\Theta_1^2 \bar\Theta_2^2] \ &= \ \frac{\Gamma(K \alpha)}{\Gamma(K\alpha + 4)} \cdot \left( \frac{\Gamma(\alpha+2)}{\Gamma(\alpha)}\right)^2 \\
  \EE[\bar\Theta_1^4] \ &= \ \frac{\Gamma(K \alpha)}{\Gamma(K\alpha + 4)} \cdot  \frac{\Gamma(\alpha+4)}{\Gamma(\alpha)} .
  \end{align*}
  Therefore, we have the following:
  \begin{enumerate}
    \item For $\alpha = \Theta(1)$ and $K \to \infty$,
      \begin{align*}
      \EE\left[ \left( \frac{p_{X \mid \Theta}(X \mid \bar\Theta) p_{Z \mid \Theta}(\tilde Z \mid \bar\Theta) }{p_X(X) p_Z(\tilde Z)} \right)^2 \right]
      & \ = \ K^3 \left( \Theta \left( \frac{1}{K^4} \right) +  \Theta\left( \frac{1}{K^3} \right) \right)
      \  = \ \Theta( 1 ) .
      \end{align*}
    \item For $\alpha \leq 1/K$,
      \begin{align*}
      \EE\left[ \left( \frac{p_{X \mid \Theta}(X \mid \bar\Theta) p_{Z \mid \Theta}(\tilde Z \mid \bar\Theta) }{p_X(X) p_Z(\tilde Z)} \right)^2 \right]
      & \ = \ K^3 \left( \Theta \left( \frac{1}{K} \right) +  \Theta\left( \alpha \right) \right)
      \  = \ \Theta( K^2 ) .
      \end{align*}
  \end{enumerate}
\end{proof}

\subsection{Proof of Proposition~\ref{prop:gaussian-prob}}
\label{section:proof-gaussian-prob}

The proof is similar to that of Proposition~\ref{prop:gaussian-landmark}.
Using similar computations, we obtain
for any fixed $(x,z) \in \RR^2$,
\begin{align*}
  \EE\left[ \left(\frac{p_{X \mid H}(x \mid H)p_{Z \mid H}(z \mid H)}{p_X(x)p_Z(z)} \right)^2 \right]
  & \ = \ \frac{(1+\sigma^2)^2}{\sqrt{1+4\sigma^2}} \exp \left( \frac{2 \sigma^2 (x + z)^2}{1+4\sigma^2} - \frac{\sigma^2(x^2 + z^2)}{1+ \sigma^2} \right) \\
  & \ = \ \frac{(1+\sigma^2)^2}{\sqrt{4\sigma^2 + 1}} \exp\left( v^\top A v \right)
\end{align*}
where
\[
  A \ := \ \frac{\sigma^2}{1 + 4\sigma^2} 
  \begin{pmatrix}
    \frac{1-2\sigma^2}{1+\sigma^2} & 2 \\
    2  & \frac{1-2\sigma^2}{1+\sigma^2}
  \end{pmatrix}
  \ = \ \lambda_1 u_1u_1^\top + \lambda_2 u_2u_2^\top
\]
has eigenvalues $\lambda_1 = \tfrac{3\sigma^2}{(1+\sigma^2)(1+4\sigma^2)}$ and $\lambda_2 = -\tfrac{\sigma^2}{1+\sigma^2}$ corresponding to some orthonormal eigenvectors $u_1$ and $u_2$.
Now replacing $(x,z)$ with $V := (X,\tilde Z) \sim p_X \otimes p_Z$ and taking expectation gives, for $(X,\tilde Z,\bar H) \sim p_X \otimes p_Z \otimes p_H$,
\begin{align*}
  \EE\left[ \left(\frac{p_{X \mid H}(X \mid \bar H)p_{Z \mid H}(\tilde Z \mid \bar H)}{p_X(X)p_Z(\tilde Z)} \right)^2 \right]
  & \ = \ \frac{(1+\sigma^2)^2}{\sqrt{4\sigma^2 + 1}} \EE\left[ \exp\left( \frac{3\sigma^2 (u_1^\top V)^2}{(1+\sigma^2)(1+4\sigma^2)} - \frac{\sigma^2 (u_2^\top V)^2}{1+\sigma^2} \right) \right] .
\end{align*}
Since $u_1^\top V$ and $u_2^\top V$ are independent $\Ncal(0,1+\sigma^2)$ random variables, this expression simplifies to
\begin{align*}
  \EE\left[ \left(\frac{p_{X \mid H}(X \mid \bar H)p_{Z \mid H}(\tilde Z \mid \bar H)}{p_X(X)p_Z(\tilde Z)} \right)^2 \right]
  & \ = \
  \frac{(1+\sigma^2)^2}{\sqrt{4\sigma^2 + 1}} \cdot \frac1{\sqrt{1 - \frac{6\sigma^2}{1+4\sigma^2}}} \cdot \frac1{\sqrt{1+2\sigma^2}}
  \ = \ \frac{(1+\sigma^2)^2}{\sqrt{1-4\sigma^4}} .
\end{align*}
The condition $\sigma^2<1/2$ is used to ensure that the expectation in the last equation display is finite.
\qed

\subsection{Proof of Theorem~\ref{thm:error-landmark-embedding}}
\label{section:error-landmark-embedding}

  Let $(X_1,Z_1,Y_1),\dotsc,(X_m,Z_m,Y_m),(X,Z,Y)$ be i.i.d., and let $\varphi^\star(x)_i := g^\star(x,Z_i)$ for all $i=1,\dotsc,m$.
  Let $n := \lfloor m/\log_2(1/\delta) \rfloor$.
  We shall adopt the same block repetition argument as in the proof of Lemma~\ref{lemma:landmark-representation-error}, where the $m$ coordinates are partitioned into groups of $n$ coordinates each.

  We first analyze what happens in the first block of coordinates.
  From the arguments in Lemma~\ref{lemma:landmark-representation-error}, we know with probability $\geq 3/4$, 
  \begin{align}
  \label{eqn:rep-error}
    \EE \left[ \left( \frac{1}{n} \sum_{i=1}^n \EE[Y_i \mid Z_i] g^\star(X,Z_i) -  \mu(X) \right)^2 \mid Z_1,\dotsc,Z_m \right]
    & \ \leq \ \frac{4\var(\EE[Y_1 \mid Z_1]g^\star(X,Z_1))}{n} 
    \ = \ 2 \veps_{\landmark} .
  \end{align}
  We also claim that, with probability $\geq 3/4$,
  \begin{align}
  \label{eqn:opt-error}
  \EE\left[ \frac1n \sum_{i=1}^n \left(g(X, Z_i) - g^\star(X, Z_i)  \right)^2 \mid Z_1,\dotsc,Z_m \right]
  \ \leq \ 4(1+g_{\max})^4 \veps_{\opt,\landmark}.
  \end{align}
To see this, we make the following definitions
\begin{align*}
p^\star(x,z) \ := \  \frac{g^\star(x,z)}{1 + g^\star(x,z)}, \quad \text{and} \quad
p(x,z) \ := \  \frac{g(x,z)}{1 +g(x,z)}.
\end{align*}
Recall that $(X_c,Z_c,Y_c) \sim \Dcontrast$.
Now we have
\begin{align*}
\veps_{\opt,\landmark} \ &= \ L_{\landmark}\left( f \right)  - \inf_{f^\star:\Xcal \times \Xcal \rightarrow \RR} L_{\landmark}\left( \log \circ g^\star \right) \\
\ &= \  \EE \left[ Y_c \log \left(\frac{p^\star(X_c,Z_c)}{p(X_c,Z_c)} \right) + (1-Y_c)  \log \left(\frac{1-p^\star(X_c,Z_c)}{1-p(X_c,Z_c)} \right)  \right] \\
\ &= \ \EE \left[ p^\star(X_c,Z_c) \log \left(\frac{p^\star(X_c,Z_c)}{p(X_c,Z_c)} \right) + (1-p^\star(X_c,Z_c))  \log \left(\frac{1-p^\star(X_c,Z_c)}{1-p(X_c,Z_c)} \right)  \right] \\
\ &= \ \EE \left[ \KL(p^\star(X_c,Z_c), p(X_c,Z_c))  \right]
\end{align*}
where the second-to-last line follows from the fact that $g^\star$ is the odds ratio for the contrastive learning problem and $\KL(p,q)$ denotes the KL divergence between two Bernoulli random variables. Pinsker's inequality tells us that, for any $(x,z) \in \Xcal \times \Zcal$,
\[ \KL(p^\star(x,z), p(x,z)) \ \geq  \ 2 (p^\star(x,z) - p(x,z))^2 \ \geq \ \frac{2}{(1+g_{\max})^4} (g^\star(x,z) -  g(x,z))^2 \]
Since $(X_c,Z_c) \sim \tfrac12 p_{X,Z} + \tfrac12 p_X \otimes p_Z$,
\begin{align*}
\EE\left[  (g^\star(X_c,Z_c) -  g(X_c,Z_c))^2 \right] 
\ &= \ \frac{1}{2} \EE\left[  (g^\star(X,Z) - g(X,Z))^2 \right]  + \frac{1}{2} \EE\left[  (g^\star(X,Z_1) -  g(X,Z_1))^2 \right] \\
\ &\geq \ \frac{1}{2} \EE\left[  (g^\star(X,Z_1) -  g(X,Z_1))^2 \right].
\end{align*}
Therefore, we conclude that
\begin{align*}
  \EE\left[  (g^\star(X,Z_1) -  g(X,Z_1))^2 \right]
  & \ \leq \ (1+g_{\max})^4 \veps_{\opt,\landmark} ,
\end{align*}
and hence also
\begin{align*}
  \EE\left[ \frac1n \sum_{i=1}^n (g^\star(X,Z_i) -  g(X,Z_i))^2 \right]
  & \ \leq \ (1+g_{\max})^4 \veps_{\opt,\landmark} .
\end{align*}
By Markov's inequality, \eqref{eqn:opt-error} holds with probability $3/4$.
A union bound grants that \eqref{eqn:rep-error} and \eqref{eqn:opt-error} hold simultaneously with probability $\geq 1/2$.
Call this the ``good'' event for this first block of landmarks.

Now considering all blocks, with probability $1-\delta$, the good event holds for at least one group of coordinates.
As in the proof of Lemma~\ref{lemma:landmark-representation-error}, we will set $w_i = \frac{1}{n}\EE[Y_i \mid Z_i]$ for the coordinates in the good group and we set $w_i =0$ for all other coordinates.
Thus, with probability $1-\delta$ we can conclude two facts. First, that $\varphi^\star$ satisfies
\[ \EE\left[ \left( w^\top \varphi^\star(X) - \mu(X) \mid Z_1,\dotsc,Z_m \right)^2\right] \ \leq \ 2 \veps_{\landmark}. \]
Second, there is some block of $n$ coordinates (which we take to be $\{1,\dotsc,n\}$ without loss of generality) such that
\begin{align*}
  \EE \left[  \left( w^\top \varphi(X) - w^\top \varphi^\star(X)  \right)^2 \mid Z_1,\dotsc,Z_m \right] 
  & \ \leq \|w\|_2^2 \cdot \EE\left[ \|\varphi(X) - \varphi^*(X)\|_2^2 \mid Z_1,\dotsc,Z_m \right] \\
  & \ \leq \frac1n \EE\left[ \|\varphi(X) - \varphi^*(X)\|_2^2 \mid Z_1,\dotsc,Z_m \right] \\
  & \ = \EE\left[ \frac1n \sum_{i=1}^n \left(g(X, Z_i) - g^\star(X, Z_i)  \right)^2 \mid Z_1,\dotsc,Z_m \right] \\
\ & \ \leq \ 4(1+g_{\max})^4 \veps_{\opt,\landmark}
\end{align*}
where the first inequality follows from Cauchy-Schwarz, the second inequality comes from the boundedness of $Y$, and the third inequality is \eqref{eqn:opt-error}.
Putting it all together with the AM/GM inequality gives us the theorem statement.
\qedhere

\subsection{Proof of Theorem~\ref{thm:error-direct-embedding-realizable}}
\label{section:error-direct-embedding-realizable}

From Lemma~\ref{lemma:direct-embedding-approximation},
\begin{align*}
R(\eta) \ = \ \inf_{w \in \RR^m} \EE \left[ \left( w^\top \eta(X) - \mu(X)  \right)^2 \right]
\ \leq \ \EE[Y^2] \, \EE\left[ \left( \eta(X)^\top \psi(Z) - g^\star(X,Z) \right) \right]
\end{align*}
Therefore, we focus on bounding the second factor on the right-hand side.
For the most part, the proof uses similar arguments as in that of Theorem~\ref{thm:error-landmark-embedding}.

Using the definitions
\begin{align*}
p^\star(x,z) \ := \  \frac{g^\star(x,z)}{1 + g^\star(x,z)} \quad \text{and} \quad
p(x,z) \ := \  \frac{\eta(x)^\top \psi(z)}{1 +\eta(x)^\top \psi(z)}
\end{align*}
we have
\begin{align*}
\veps_{\opt,\direct} & \ = \ L_{\direct}((x,z) \mapsto \eta(x)^\top \psi(z)) - L_{\direct}(g^\star) \\
\ &= \  \EE \left[ Y_c \log \left(\frac{p^\star(X_c,Z_c)}{p(X_c,Z_c)} \right) + (1-Y_c)  \log \left(\frac{1-p^\star(X_c,Z_c)}{1-p(X_c,Z_c)} \right)  \right] \\
\ &= \ \EE \left[ p^\star(X_c,Z_c) \log \left(\frac{p^\star(X_c,Z_c)}{p(X_c,Z_c)} \right) + (1-p^\star(X_c,Z_c))  \log \left(\frac{1-p^\star(X_c,Z_c)}{1-p(X_c,Z_c)} \right)  \right] \\
\ &= \ \EE \left[ \KL(p^\star(X_c,Z_c), p(X_c,Z_c))  \right] .
\end{align*}
By Pinsker's inequality, for any $(x,z) \in \Xcal \times \Zcal$,
\[ \KL(p^\star(x,z), p(x,z)) \ \geq  \ 2 (p^\star(x,z) - p(x,z))^2 \ \geq \ \frac{2}{(1+g_{\max})^4} (g^\star(x,z) -  \eta(x)^\top \psi(z))^2 . \]
Finally, since $(X_c,Z_c) \sim \tfrac12 p_{X,Z} + \tfrac12 p_X \otimes p_Z$,
\begin{align*}
\EE\left[  (g^\star(X_c,Z_c) -  \eta(X_c)^\top \psi(Z_c))^2 \right] 
\ &\geq \ \frac{1}{2} \EE\left[  (g^\star(X,Z) -  \eta(X)^\top \psi(Z))^2 \right] .
\end{align*}
Putting it all together gives us the theorem statement.
\qed

\section{Transfer learning}

Finally, we consider the setting where there is a shift from the distribution $p_{X,Z, Y}$ on which we learned our representations to some new test distribution $q_{X,Z,Y}$. Under what conditions can we guarantee that our representations will transfer gracefully? 

We will consider the scenario where the marginal distributions of $X$ and $Z$ are allowed to change, but the conditional distribution of $Z$ given $X$ remains the same. That is, we will impose the condition 
\[ p_{Z|X}(z \mid x) \ = \ q_{Z|X}(z \mid x)  \] 
for all $x$ and $z$ on $p$ and $q$. To help keep our notation straight, we will use $\EE_{p}[\cdot]$ to denote expectations taken with respect to $p_{X,Z,Y}$ and $\EE_{q}[\cdot]$ to denote expectations taken with respect to $q_{X,Z,Y}$. 

Note that in this setting, the natural analogue of $\mu$ is given by
\[ \mu_q(x) \ = \ \EE_{q} \left[ \EE_{q}[Y \mid Z] \mid X = x \right]. \]
Under redundancy, $\mu_q$ enjoys a similar guarantee as $\mu$.
\begin{lemma}
\label{lemma:mu_X-q-approximation}
Make the definitions
\begin{align*}
\veps^{(q)}_X \ &:= \ \EE_q \left[ \left( \EE_q[Y \mid X] - \EE_q[Y \mid X, Z] \right)^2 \right] \text{ and} \\
\veps^{(q)}_Z \ &:= \ \EE_q \left[ \left( \EE_q[Y \mid Z] - \EE_q[Y \mid X, Z] \right)^2 \right].
\end{align*}
Then we have
$\EE_q[ (\mu_q(X) - \EE_q[Y \mid X,Z])^2 ] \ \leq \ \veps^{(q)}_X + 2\sqrt{\veps^{(q)}_X\veps^{(q)}_Z} + \veps^{(q)}_Z =: \veps^{(q)}_{\mu}. $
\end{lemma}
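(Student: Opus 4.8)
The plan is to observe that Lemma~\ref{lemma:mu_X-q-approximation} is nothing more than Lemma~\ref{lemma:mu_X-approximation} applied to the joint distribution $q_{X,Z,Y}$ in place of $p_{X,Z,Y}$. The proof of Lemma~\ref{lemma:mu_X-approximation} given in Section~\ref{section:proof-mu_X-approximation} uses only three ingredients: the law of total expectation, Jensen's inequality (to pass the square inside the conditional expectation $\EE[\,\cdot\mid X]$), and the AM/GM (equivalently, Young's) inequality to trade off the two error terms after writing $\mu(X)-\EE[Y\mid X,Z] = (\EE[Y\mid X]-\EE[Y\mid X,Z]) + (\mu(X)-\EE[Y\mid X])$. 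None of these steps depends on any property of the underlying distribution, so they apply verbatim with every expectation taken under $q$.

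Concretely, I would carry out the following steps. First, mirroring the intermediate bound in the proof of Lemma~\ref{lemma:mu_X-approximation}, show that
\[
\EE_q[(\mu_q(X)-\EE_q[Y\mid X])^2] \;\le\; \EE_q[(\EE_q[Y\mid Z]-\EE_q[Y\mid X,Z])^2] \;=\; \veps_Z^{(q)},
\]
where the inequality is the conditional Jensen step $\EE_q[\EE_q[\,\cdot\mid X]^2]\le\EE_q[\EE_q[(\,\cdot\,)^2\mid X]]$ followed by the tower property. Second, decompose $\mu_q(X)-\EE_q[Y\mid X,Z]$ as above and apply $(a+b)^2\le(1+1/\lambda)a^2+(1+\lambda)b^2$ for $\lambda>0$, giving $\EE_q[(\mu_q(X)-\EE_q[Y\mid X,Z])^2]\le(1+1/\lambda)\veps_X^{(q)}+(1+\lambda)\veps_Z^{(q)}$. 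Third, optimize over $\lambda$ (take $\lambda=\sqrt{\veps_X^{(q)}/\veps_Z^{(q)}}$, with the degenerate cases handled directly) to obtain the claimed bound $\veps_X^{(q)}+2\sqrt{\veps_X^{(q)}\veps_Z^{(q)}}+\veps_Z^{(q)}=\veps_\mu^{(q)}$.

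The cleanest presentation is simply to state that Lemma~\ref{lemma:mu_X-approximation}, being a statement about arbitrary random variables, applies with the triple $(X,Z,Y)$ distributed according to $q_{X,Z,Y}$, and then read off the conclusion. I do not expect any real obstacle here: the transfer-learning hypothesis $p_{Z\mid X}=q_{Z\mid X}$ plays no role in this particular lemma (it will only be needed later, when relating a representation trained under $p$ to performance under $q$), so there is nothing to reconcile between the two distributions — the statement is self-contained within $q$.
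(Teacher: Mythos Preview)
Your proposal is correct and matches the paper's approach exactly: the paper simply states that the proof of Lemma~\ref{lemma:mu_X-q-approximation} is identical to that of Lemma~\ref{lemma:mu_X-approximation} and omits it. Your outline of the three steps (tower property plus Jensen to bound $\EE_q[(\mu_q(X)-\EE_q[Y\mid X])^2]$ by $\veps_Z^{(q)}$, the AM/GM decomposition, and optimizing over $\lambda$) reproduces the paper's proof of Lemma~\ref{lemma:mu_X-approximation} verbatim with $q$ in place of $p$, and your observation that the hypothesis $p_{Z\mid X}=q_{Z\mid X}$ is irrelevant here is also accurate.
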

As the proof of Lemma~\ref{lemma:mu_X-q-approximation} is identical to that of Lemma~\ref{lemma:mu_X-approximation}, we omit it.

Lemma~\ref{lemma:mu_X-q-approximation} tells us that $\mu_q$ is a natural function to approximate. However, given that we solved the contrastive optimization problem on $p$, it is unclear whether or not our representations will transfer gracefully over to approximating $\mu_q$.

Our approach is to `fine-tune' our landmark representation by choosing the landmarks $Z_1, \ldots, Z_m$ according to the target distribution $q_Z(\cdot)$, and embed according to
\[ \varphi^*_q(x) \ := \  \left( g^\star(x, Z_1), \ldots, g^\star(x, Z_m) \right). \]
The following lemma, analogous to Lemma~\ref{lemma:landmark-representation-error}, shows that this does indeed work.
\begin{lemma}
\label{lemma:transfer-representation}
Let $(X_1,Z_1,Y_1),\dotsc,(X_m,Z_m,Y_m),(X,Z,Y)$ be i.i.d. draws from $q_{X,Z,Y}$ and suppose the landmarks used to define $\varphi^*_q$ are $Z_1,\ldots, Z_m$. With probability $1-\delta$, there exists a weight vector $w \in \RR^{m}$ such that
\[ \EE_q[(w^\top \varphi_q^\star(X) -  \mu_q(X))^2 \mid Z_1,\dotsc,Z_m] \ \leq \ \frac{2}{\lfloor m/\log_2(1/\delta) \rfloor}\var\left(\frac{p_Z(Z_1)}{q_Z(Z_1)}\EE_q[Y_1 \mid Z_1]g^\star(X,Z_1) \right) .\]
\end{lemma}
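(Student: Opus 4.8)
The plan is to mirror the block-repetition argument from the proof of Lemma~\ref{lemma:landmark-representation-error}, the one new ingredient being a change-of-measure identity that rewrites $\mu_q$ as an expectation over landmarks drawn from $q_Z$ while still using the contrastive function $g^\star$ that was learned under $p$.

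First I would establish this identity. Since $q_{Z\mid X} = p_{Z\mid X}$ by assumption, and since $p_{Z\mid X}(z\mid x) = p_{X,Z}(x,z)/p_X(x) = g^\star(x,z)\,p_Z(z)$ by the definition of $g^\star$, we get, for $(X_1,Z_1,Y_1) \sim q_{X,Z,Y}$,
\[
  \mu_q(x) \ = \ \EE_q\!\left[ \EE_q[Y\mid Z] \mid X = x \right] \ = \ \int \EE_q[Y\mid Z = z]\,g^\star(x,z)\,p_Z(z)\,\dif z \ = \ \EE\!\left[ \frac{p_Z(Z_1)}{q_Z(Z_1)}\,\EE_q[Y_1\mid Z_1]\,g^\star(x,Z_1) \right].
\]
Here I am implicitly assuming $q_Z$ dominates $p_Z$ on the relevant support, so the importance weights $p_Z/q_Z$ are well-defined; otherwise the variance on the right-hand side of the lemma is infinite and the bound is vacuous.

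Next, exactly as in the proof of Lemma~\ref{lemma:landmark-representation-error}, I would partition the $m$ coordinates of $\varphi_q^\star$ into blocks of size $n := \lfloor m/\log_2(1/\delta)\rfloor$ and analyze the first block $\varphi_{q,1:n}^\star$. Take the weight vector $v \in \RR^n$ with $v_i = \frac1n \cdot \frac{p_Z(Z_i)}{q_Z(Z_i)}\,\EE_q[Y_i\mid Z_i]$ and define $A_i(x) := \frac{p_Z(Z_i)}{q_Z(Z_i)}\,\EE_q[Y_i\mid Z_i]\,g^\star(x,Z_i) - \mu_q(x)$; by the identity above (and the fact that $\EE_q[Y_i\mid Z_i]$ is a function of $Z_i$ alone), $A_1(x),\dots,A_n(x)$ are i.i.d.\ mean-zero over the landmark draws, so for each fixed $x$
\[
  \EE\!\left[ \left( v^\top\varphi_{q,1:n}^\star(x) - \mu_q(x) \right)^2 \right] \ = \ \EE\!\left[ \left( \frac1n \sum_{i=1}^n A_i(x) \right)^2 \right] \ = \ \frac{\EE[A_1(x)^2]}{n}.
\]
Replacing $x$ by $X \sim q_X$ (independent of the landmarks) and taking expectations bounds the left side by $\frac1n \var\!\left( \frac{p_Z(Z_1)}{q_Z(Z_1)}\EE_q[Y_1\mid Z_1]g^\star(X,Z_1) \right)$.

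Finally, I would finish as in Lemma~\ref{lemma:landmark-representation-error}: by Markov's inequality the ``good'' event that $\EE_q[(v^\top\varphi_{q,1:n}^\star(X) - \mu_q(X))^2 \mid Z_1,\dots,Z_m] \le \frac2n\var(\cdot)$ holds with probability at least $1/2$ for each block, and since different blocks use disjoint (hence independent) landmark draws these events are independent; so with probability at least $1-\delta$ some block is good, and setting $w$ to agree with $v$ on that block's coordinates and to be $0$ elsewhere gives the claimed bound. The only genuine obstacle is the change-of-measure step: one must check that the hypothesis $p_{Z\mid X} = q_{Z\mid X}$ is exactly what allows $g^\star$ (which encodes only $p$) to reproduce the $q$-conditional $\mu_q$, and that the weight $p_Z/q_Z$ enters precisely as stated; everything after that is a verbatim transcription of the earlier proof.
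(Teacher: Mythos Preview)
Your proposal is correct and follows essentially the same approach as the paper: establish the change-of-measure identity $\mu_q(x) = \EE_{Z_1\sim q_Z}\!\bigl[\tfrac{p_Z(Z_1)}{q_Z(Z_1)}\EE_q[Y_1\mid Z_1]\,g^\star(x,Z_1)\bigr]$ using $q_{Z\mid X}=p_{Z\mid X}$, then replay the block-repetition/Markov argument from Lemma~\ref{lemma:landmark-representation-error} with the importance-weighted $v_i$. Your derivation of the identity is slightly more compact than the paper's step-by-step chain, but the content is identical.
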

\begin{proof}
We partition the $m$ coordinates of the embedding into blocks of $n := \lfloor m / \log_2(1/\delta) \rfloor$ coordinates per block.
  We first consider the part of the embedding corresponding to the first block, say, $\varphi_{q,1:n}^\star \colon \Xcal \to \RR^n$.
  Define the weight vector $v \in \RR^{n}$ by
\begin{align*}
  v \ = \ v(Z_1,\dotsc,Z_n) & \ := \ \frac{p_Z(Z_i)}{q_Z(Z_i) n} (\EE_q[Y_1 \mid Z_1],\dotsc,\EE_q[Y_n \mid Z_{n}])
\end{align*}
Define $A_i(x) = \frac{p_Z(Z_i)}{q_Z(Z_i)} \EE_q[Y_i \mid Z_i] g^\star(x,Z_i) - \mu(x)$ for all $x \in \Xcal$. Note that for any $x$ we have
\begin{align*}
\EE_{q}\left[ \frac{p_Z(Z_i)}{q_Z(Z_i)} \EE_q[Y_i \mid Z_i] g^\star(x,Z_i)  \right]
\ &= \  \EE_{p} \left[  \EE_{q}[Y \mid	Z = Z_i] g^\star(x, Z_i) \right]  \\
\ &= \  \EE_{p} \left[ \EE_{q}[Y \mid	Z = Z_i] \frac{p_{X,Z}(x,Z_i)}{p_X(x)p_Z(Z_i)} \right]  \\
\ &= \ \EE_{p} \left[ \EE_{q}[Y \mid	Z = Z_i] \frac{p_{Z|X}(Z_i \mid X = x)}{p_Z(Z_i)} \right]  \\
\ &= \ \EE_{q} \left[ \EE_{q}[Y \mid	Z = Z_i] \mid X = x \right]  \\
\ &= \ \mu_q(x)
\end{align*}
where the second-to-last line follows from the assumption that $p_{Z|X}(z \mid x) \ = \ q_{Z|X}(z \mid x)$ for all $x$ and $z$. 
Thus, $A_1(x),\dotsc,A_n(x)$ are i.i.d.~mean-zero random variables.
This implies
\begin{align*}
  \EE_q[(v^\top \varphi_{q,1:n}^\star(x) -  \mu(x))^2]
  \ & = \ \EE_q\left[ \left( \frac1n \sum_{i=1}^n A_i(x) \right)^2 \right]
  \ = \ \frac{\EE_q[A_1(x)^2]}{n} .
\end{align*}
Now replacing $x$ with $X$ and taking expectations gives
\begin{align*}
  \EE_q[(v^\top \varphi_{q,1:n}^\star(X) -  \mu(X))^2]
  \ & = \ \frac{\EE[A_1(X)^2]}{n}
  \ = \ \frac{1}{n} \var\left(\frac{p_Z(Z_1)}{q_Z(Z_1)}\EE_q[Y_1 \mid Z_1]g^\star(X,Z_1) \right)
\end{align*}

By Markov's inequality, the event
\[ \EE_q[(v^\top \varphi_{q,1:n}^\star(X) -  \mu_q(X))^2 \mid Z_1,\dotsc,Z_m] \ \leq \ \frac{2}{n} \var\left(\frac{p_Z(Z_1)}{q_Z(Z_1)}\EE_q[Y_1 \mid Z_1]g^\star(X,Z_1) \right). \]
has probability at least $1/2$.
We can analogously define such a ``good'' event for each block of coordinates.
With probability at least $1-\delta$, at least one of these good events occurs; in this event, we can pick any such ``good'' block, set the corresponding weights in $w$ according to the construction above, and set the remaining weights in $w$ to zero.
This produces the desired guarantee.
\end{proof}

The construction $\varphi^*_q$ is not unique. Indeed, for any distribution $\alpha_Z$ over $Z$, we could sample $Z_1, \ldots, Z_m$ i.i.d. from $\alpha_Z$ and embed according to 
\[ \varphi^*_\alpha(x) \ := \  \left( g^\star(x, Z_1), \ldots, g^\star(x, Z_m) \right). \]
The same arguments of Lemma~\ref{lemma:transfer-representation} apply here as well to give us that with probability $1-\delta$ there exists a $w \in \RR^m$ satisfying
\[ \EE_q[(w^\top \varphi_\alpha^\star(X) -  \mu_q(X))^2 \mid Z_1,\dotsc,Z_m] \ \leq \ \frac{2}{\lfloor m/\log_2(1/\delta) \rfloor}\var\left(\frac{p_Z(Z_1)}{\alpha(Z_1)}\EE_q[Y_1 \mid Z_1]g^\star(X,Z_1) \right)  \]
where the variance is taken with respect to $X \sim q_X$ independently of $Z_1$. 

Given that $\varphi_\alpha^\star$ produces an error bound analogous to the one in Lemma~\ref{lemma:transfer-representation} for any valid distribution $\alpha$, it is natural to ask whether one can find an $\alpha$ whose corresponding variance term is smaller than the variance induced by $q$. As the following lemma shows, there is not much room from improvement over $q$.

\begin{lemma}
Let $\alpha_Z$ be any distribution over $\Zcal$. Suppose that $X \sim q_X$, $Z \sim q_Z$, and $\tilde{Z} \sim \alpha_Z$, all independently. Then
\[ \var\left( \frac{p_Z(Z)}{q_Z(Z)} \EE[Y \mid Z] g^\star(X, Z)  \right) \ \leq \ \var\left( \frac{p_Z(\tilde{Z})}{\alpha_Z(\tilde{Z})}\EE[Y \mid \tilde{Z}] g^\star(X, \tilde{Z})  \right) + \var\left( \EE_q[Y| Z]  \right). \]
\end{lemma}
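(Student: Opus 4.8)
The plan is to follow the template used for Lemma~\ref{lemma:transfer-representation}: expand each of the three variances by the law of total variance, conditioning on the landmark view. Write $r(z) := \EE[Y \mid Z = z]$, let $U := \tfrac{p_Z(Z)}{q_Z(Z)}\, r(Z)\, g^\star(X,Z)$ with $X \sim q_X$ and $Z \sim q_Z$ independent, and let $\tilde U := \tfrac{p_Z(\tilde Z)}{\alpha_Z(\tilde Z)}\, r(\tilde Z)\, g^\star(X,\tilde Z)$ with $X \sim q_X$ and $\tilde Z \sim \alpha_Z$ independent; the goal is $\var(U) \le \var(\tilde U) + \var(r(Z))$.

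First I would record the density-ratio identity that already drives Lemma~\ref{lemma:transfer-representation}: since $g^\star(x,z)\, p_Z(z) = p_{Z \mid X}(z \mid x) = q_{Z \mid X}(z \mid x)$ under the transfer assumption, integrating against $q_X$ yields $\EE_{q_X}[g^\star(X,z)] = q_Z(z)/p_Z(z)$, and squaring first yields $\EE_{q_X}[g^\star(X,z)^2] = c(z)/p_Z(z)^2$ where $c(z) := \EE_{q_X}[q_{Z \mid X}(z \mid X)^2]$. Hence $\EE[U \mid Z = z] = r(z)$ and $\EE[\tilde U \mid \tilde Z = z] = \tfrac{q_Z(z)}{\alpha_Z(z)}\, r(z)$, while $\var(U \mid Z = z) = \big(\tfrac{p_Z(z)}{q_Z(z)} r(z)\big)^2 \var_{q_X}(g^\star(X,z))$, with the analogous identity for $\tilde U$ using $\alpha_Z$ in place of $q_Z$. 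Substituting these into the law of total variance, together with $\var_{q_X}(g^\star(X,z)) = \big(c(z) - q_Z(z)^2\big)/p_Z(z)^2$, I would simplify to the closed forms
\[
  \var(U) = \int \frac{r(z)^2 c(z)}{q_Z(z)}\, dz - \Big(\int q_Z(z)\, r(z)\, dz\Big)^2, \qquad
  \var(\tilde U) = \int \frac{r(z)^2 c(z)}{\alpha_Z(z)}\, dz - \Big(\int q_Z(z)\, r(z)\, dz\Big)^2 ;
\]
in particular, the between-landmark variance picked up from conditioning $\tilde U$ on $\tilde Z$ recombines with its conditional-variance integral into exactly this shape.

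Subtracting the two closed forms, the inequality collapses to the scalar statement
\[
  \int r(z)^2\, c(z) \left( \frac{1}{q_Z(z)} - \frac{1}{\alpha_Z(z)} \right) dz \ \le \ \var(r(Z)) ,
\]
and I expect this to be the main obstacle. The tools I would bring to bear are: Jensen's inequality, which gives $c(z) \ge q_Z(z)^2$; a Cauchy--Schwarz step lower-bounding $\int r(z)^2 c(z)/\alpha_Z(z)\, dz$ by $\big(\int |r(z)|\sqrt{c(z)}\, dz\big)^2$, which is the least this term can be over landmark distributions $\alpha_Z$ and hence reduces the problem to the extremal choice $\alpha_Z \propto |r|\sqrt{c}$; and the decomposition $c = q_Z^2 + p_Z^2\, \var_{q_X}(g^\star)$, which peels off the redundancy budget $\var(r(Z)) = \int q_Z r^2 - \big(\int q_Z r\big)^2$ from the change-of-landmarks penalty. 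Deriving the closed forms above is routine variance bookkeeping; the delicate point is to combine $c(z) \ge q_Z(z)^2$ with the Cauchy--Schwarz bound so that what remains is controlled by $\var(r(Z))$.
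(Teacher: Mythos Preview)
Your closed-form reduction is correct and coincides with the paper's argument: both routes arrive at the scalar inequality
\[
\int r(z)^2\, c(z)\Bigl(\frac{1}{q_Z(z)} - \frac{1}{\alpha_Z(z)}\Bigr)\,dz \ \le \ \var\bigl(r(Z)\bigr),
\qquad c(z) := \EE_{q_X}\bigl[q_{Z\mid X}(z\mid X)^2\bigr],\quad Z\sim q_Z,
\]
and both propose to finish with the Jensen bound $c(z)\ge q_Z(z)^2$ together with Cauchy--Schwarz (the paper replaces $c$ by $q_Z^2$ inside the integral and then applies Titu's lemma).

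The gap is exactly at the step you flag as ``delicate,'' and it is fatal: replacing $c$ by $q_Z^2$ is not a one-sided bound because the multiplier $\bigl(1/\alpha_Z(z)-1/q_Z(z)\bigr)$ has indefinite sign, and in fact the displayed inequality---hence the lemma as stated---is false. Take $p=q$ and $Y\equiv 1$, so $r\equiv 1$ and $\var(r(Z))=0$; the inequality then claims $\int c/q_Z \le \int c/\alpha_Z$ for \emph{every} $\alpha_Z$, i.e.\ that $q_Z$ minimizes $\alpha\mapsto\int c/\alpha$. But the Cauchy--Schwarz minimizer is $\alpha_Z\propto\sqrt{c}$, which differs from $q_Z$ whenever $q_{Z\mid X}(z\mid X)$ is non-constant in $X$. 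Concretely, let $X$ be uniform on $\{a,b\}$, $Z\in\{1,2\}$ with $q_{Z\mid X}(1\mid a)=0.9$ and $q_{Z\mid X}(1\mid b)=0.3$, and take $\alpha_Z\propto\sqrt{c}$: then $\var(U)=\int c/q_Z-1=0.375$ while $\var(\tilde U)=(\sqrt{0.45}+\sqrt{0.25})^2-1\approx 0.371$, yet $\var(r(Z))=0$. So the toolkit you name cannot close the argument, and the paper's own Jensen step is precisely where its proof breaks.
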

\begin{proof}
To simplify notation, let $f(z) := \EE_q[Y \mid Z]$. Since $f(Z)g^\star(X, Z)$ and $f(\tilde{Z}) g^\star(X, \tilde{Z})$ have the same mean, it suffices to show
\[ \EE \left( f(Z)g^\star(X, Z) \right)^2 \ \leq \ \EE \left(f(\tilde{Z}) g^\star(X, \tilde{Z}) \right)^2 + \var\left( \EE_q[Y| Z]  \right). \]
Writing out expectations in integral form, we have
\begin{align*}
 &\EE \left(f(\tilde{Z}) \frac{p_Z(\tilde{Z})}{\alpha_Z(\tilde{Z})} g^\star(X, \tilde{Z}) \right)^2 -  \EE \left( \frac{p_Z(Z)}{q_Z(Z)}  f(Z)g^\star(X, Z) \right)^2 \\
 \ &\hspace{7em}= \ \int \int   f(z)^2 p_Z(z)^2 g^\star(x, z)^2 q_X(x) \left( \frac{\alpha_Z(z)}{\alpha_Z(z)^2} - \frac{q_Z(z)}{q_Z(z)^2}\right) \dif x \dif z \\
\ &\hspace{7em}= \  \int \int  f(z)^2 \frac{p_{X,Z}(x,z)^2}{p_X(x)^2} q_X(x) \left( \frac{1}{\alpha_Z(z)} - \frac{1}{q_Z(z)}\right) \dif x \dif z \\
\ &\hspace{7em}= \  \int  f(z)^2 \left( \frac{1}{\alpha_Z(z)} - \frac{1}{q_Z(z)}\right) \left( \int p_{Z|X}(z \mid x )^2 q_X(x) \dif x \right) \dif z \\
\ &\hspace{7em} \geq \ \int  f(z)^2 \left( \frac{1}{\alpha_Z(z)} - \frac{1}{q_Z(z)}\right) \left( \int p_{Z|X}(z \mid x ) q_X(x) \dif x \right)^2 \dif z \\
\ &\hspace{7em} = \ \int  f(z)^2 \left( \frac{1}{\alpha_Z(z)} - \frac{1}{q_Z(z)}\right) q_Z(z)^2 \dif z \\
\ &\hspace{7em}= \ \int  f(z)^2 \frac{q_Z(z)^2}{\alpha_Z(z)} \dif z - \int f(z)^2 q_Z(z) \dif z \\
\ &\hspace{7em} \geq \  \frac{ \left(\int f(z) q_Z(z) \dif z \right)^2}{\int \alpha_Z(z) \dif z } - \int f(z)^2 q_Z(z) \dif z \\
\ &\hspace{7em}= \ \left(\int f(z) q_Z(z) \dif z \right)^2 - \int f(z)^2 q_Z(z) \dif z \\
\ &\hspace{7em}= \ - \var\left( f(Z)  \right)  \ = \ - \var\left( \EE_q[Y| Z]  \right).
\end{align*}
In the above, the first inequality follows from Jensen's inequality, and the second inequality is Titu's lemma (a simple corollary of Cauchy-Schwarz).
\end{proof}

\end{document}